\newcommand{\p}{\mathcal{p}}
\newcommand{\e}{\mathcal{e}}
\renewcommand{\i}{\mathcal{i}}
\renewcommand{\l}{\ell}
\newcommand{\W}{\mathcal{W}}
\newcommand{\size}[1]{\lvert #1 \rvert}
\renewcommand{\eqref}[1]{Equation~\ref{eq:#1}}
\newcommand{\eqsref}[2]{Equations~\ref{eq:#1} and~\ref{eq:#2}}
\newcommand{\secref}[1]{Section~\ref{sec:#1}}
\newcommand{\appref}[1]{Appendix~\ref{app:#1}}
\newcommand{\thmref}[1]{Theorem~\ref{thm:#1}}
\newcommand{\lemref}[1]{Lemma~\ref{lem:#1}}
\newcommand{\lemstworef}[2]{Lemmas~\ref{lem:#1} and~\ref{lem:#2}}
\newcommand{\lemsref}[3]{Lemmas~\ref{lem:#1}, ~\ref{lem:#2}, and~\ref{lem:#3}}
\newcommand{\defref}[1]{Definition~\ref{def:#1}}
\newcommand{\defsref}[3]{Definitions~\ref{def:#1}, ~\ref{def:#2}, and~\ref{def:#3}}
\newcommand{\corref}[1]{Corollary~\ref{cor:#1}}
\newcommand{\algref}[1]{Algorithm~\ref{alg:#1}}
\newcommand{\argmax}[1]{\underset{#1}{\text{argmax }}}
\newcommand{\mmin}[1]{\underset{#1}{\text{min }}}
\renewcommand{\inf}[1]{\underset{#1}{\text{inf }}}
\newcommand\ose{\textsc{ose }}
\newcommand\osen{\textsc{ose}}
\newcommand\NP{\textbf{NP}}
\newcommand{\BigO}[1]{\ensuremath{\operatorname{O}\!\left(#1\right)}}
\newcommand{\LittleO}[1]{\ensuremath{\operatorname{o}\!\left(#1\right)}}
\newcommand{\BigOmega}[1]{\ensuremath{\operatorname{\Omega}\!\left(#1\right)}}
\newcommand{\BigTheta}[1]{\ensuremath{\operatorname{\Theta}\!\left(#1\right)}}
\newcommand\newcite{\citet}
\renewcommand\cite{\citep}
\begin{document}

\title{An Approximation Algorithm for Optimal Subarchitecture Extraction}

\author{Adrian de Wynter\inst{1}}%
\authorrunning{A. de Wynter}

\institute{$^1$Amazon Alexa, 300 Pine St., Seattle, Washington, USA 98101 \\
  \email{dwynter@amazon.com}
}
\maketitle

\begin{abstract}\footnote{This paper has not been fully peer-reviewed.}%

We consider the problem of finding the set of architectural parameters for a chosen deep neural network which is optimal under three metrics: parameter size, inference speed, and error rate. 
In this paper we state the problem formally, and present an approximation algorithm 
that, for a large subset of instances behaves like an FPTAS with an approximation error of 
$\rho \leq \size{1- \epsilon}$, 
and that runs in 
$O(\size{\Xi} + \size{W^*_T}(1 +  \size{\Theta}\size{B}\size{\Xi}/({\epsilon\, s^{3/2})}))$ 
steps, where $\epsilon$ and $s$ are input parameters; $\size{B}$ is the batch size; $\size{W^*_T}$ denotes the cardinality of the largest weight set assignment; and $\size{\Xi}$ and $\size{\Theta}$ are the cardinalities of the candidate architecture and hyperparameter spaces, respectively. 
\keywords{neural architecture search \and deep neural network compression}
\end{abstract}

\section{Introduction}

Due to their success and versatility, as well as more available computing power, deep neural networks have become ubiquitous on multiple fields and applications. 
In an apparent correlation between performance versus size, \emph{Jurassic} neural networks--that is, deep neural networks with parameter sizes dwarfing their contemporaries--such as the the 17-billion parameter Turing-NLG \cite{TuringNLG} have also achieved remarkable, ground-breaking successes on a variety of tasks. 

However, the relatively short history of this field has shown that Jurassic networks tend to be replaced by smaller, higher-performing models. For instance, the record-shattering, then-massive, 138-million parameter VGG-16 by \newcite{VGG16}, topped the leaderboards on the ImageNet challenge. Yet, the following year it was displaced by the 25-million parameter ResNet by \newcite{ResNet}.\footnote{\url{http://image-net.org/challenges/LSVRC/2014/index}; accessed June $26^{\text{th}}$, 2020. Both models were eventually displaced by DPN \cite{DPNs}, with about half the number of parameters of VGG-16.}. Another example is the 173-billion parameter GPT-3 from \newcite{GPT3}, which was outperformed by the 360-million parameter RoBERTa-large from \newcite{RoBERTa} in the SuperGLUE benchmark leaderboard.\footnote{\url{https://super.gluebenchmark.com/leaderboard}, accessed June $26^{\text{th}}$, 2020.}


Indeed, many strong developments have come from networks that have less parameters, but are based off these larger models. 
While it is known that overparametrized models train easily \cite{Livni}, 
a recent result by \newcite{LotteryTicket} states that it is possible to remove several neurons for a trained deep neural network and still retain good, or even better, generalizability when compared to the original model. From a computational point of view, this suggests that, for a trained architecture, there exists a smaller version with a weight configuration that works just as well as its larger counterpart. 
Simple yet powerful algorithms, such as the one by \newcite{Rewinding} have already supplied an empirical confirmation of this hypothesis, by providing a procedure through which to obtain high-performing pruned versions of a variety of models.

Our problem, which we dub the \emph{optimal subarchitecture extraction} (OSE) problem, is motivated from these observations. 
Loosely speaking, the OSE problem is the task of selecting the best non-trainable parameters for a neural network, such that it is optimal in the sense that it minimizes parameter size, inference speed, and error rate. 

Formally, the OSE problem is interesting on its own right--knowing whether such optimal subarchitecture can (or cannot) be found efficiently has implications on the design and analysis of better algorithms for automated neural network design, which perhaps has an even higher impact on the environment than the training of Jurassic networks. 
More importantly, hardness results on a problem are not mere statements about the potential running time of solutions for it; they are statements on the intrinsic difficulty of the problem itself.

\subsection{Summary of Results}

In this paper, we present a formal characterization of the OSE problem, which we refer to as \osen, and prove that it is weakly \NP-hard. We then introduce an approximation algorithm to solve it, and prove its time and approximation bounds. In particular, we show that the algorithm in question 
under a large set of inputs attains an absolute error bound of $c \leq \epsilon - 1$, and runs on $O(\size{\Xi} + \size{W^*_T}(1 +  \size{\Theta}\size{B}\size{\Xi}/({\epsilon\, s^{3/2})}))$ steps, where $\epsilon$ and $s$ are input parameters; $\size{B}$ is the batch size; $\size{W^*_T}$ is the cardinality of the largest weight set assignment; and $\size{\Xi},\size{\Theta}$ are the cardinalities of the candidate architecture and hyperparameter spaces, respectively. 
These results apply for a class of networks that fullfills three conditions: the intermediate layers are considerably more expensive, both in terms of parameter size and operations required, than the input and output functions of the network; the corresponding optimization problem is $L$-Lipschitz smooth with bounded stochastic gradients; and the training procedure uses stochastic gradient descent as the optimizer. 
We refer to these assumptions collectively as the \emph{strong} $AB^nC$ \emph{property}, and note that it fits the contemporary paradigm of deep neural network training. 
We also show that if we assume the optimization to be $\mu$-strongly convex, this algorithm is an FPTAS with an approximation ratio of $\rho \leq \size{1- \epsilon}$; and remark that the results from the FPTAS version of the algorithm hold regardless of convexity, if we assume the optimal solution set to be the set of reachable weights by the optimizer under the input hyperparameters sets $\Theta$. 

\subsection{Related Work}\label{sec:relatedwork}

The OSE problem is tied to three subfields on machine learning: neural architecture search (NAS), weight pruning, and general neural network compression techniques. 
All of these areas are complex and vast; while we attempted to make an account of the most relevant works, it is likely we unknowingly omitted plenty of contributions. 
We link several surveys and encourage the reader to dive deeper on the subjects of interest. 

Given that OSE searches for optimal architectural variations, it is a special case of NAS. 
Indeed, \emph{True} NAS, where where neither the weights, architectural parameters, or even the functions composing the architecture are fully defined, has been explored in both an applied and a theoretical fashion. 
Several well-known examples of the former can be seen in \newcite{Zoph2016NeuralAS,AmoebaNet,liu2019darts}, and \newcite{GeometryAware}, when applied to deep learning, although early work on this area can be seen in \newcite{CarpenterAndGrossberg} and \newcite{SchafferCaruana}. The reader is referred to the survey by \newcite{Elsken2019NeuralAS} for NAS, and the book by \newcite{automl_book} for an overview of its applications in the nascent field of automated neural network design, or AutoML.

From a computational standpoint our problem can be seen as a special case of the so-called Architecture Search Problem from \newcite{deWynterFA}, where the architecture remains fixed, but the non-trainable parameters (e.g., the dimensionality of the hidden layers) for said architecture do not. 
The hardness results around OSE are not surprising, and are based off several important contributions to the complexity theory of training a neural network in the computational model: 
the loading problem was shown to be \NP-complete for a 3-node architecture with hard threshold \cite{BlumAndRivest} and continuous activation functions \cite{DasGupta}; 
\newcite{Arora} showed that a 2-layer neural network with a ReLU activation unit could learn a global minima in exponential time on the input dimensionality; 
and, more generally, the well-known results by \newcite{Klivans} and \newcite{DainelyHalfSpaces} prove that intersections of half-spaces are not efficiently learnable under certain assumptions. 
It is argued by \newcite{ShalevUML} that this result has implications on the impossibility of efficiently training neural networks; likewise, \newcite{AroraBabai} showed that surrogating a loss function and minimizing the gradient--a common practice in deep learning to circumvent non-convexity--is intractable for a large class of functions. 
Other works proving stronger versions of these results can be found in \newcite{DanielyAndShaiS}.

Our work, however, is more closely related to that by \newcite{Judd}, who showed that approximately loading a network, independently of the type of node used, is \NP-complete on the worst case. It presents a very similar approach insofar as both their approach and ours emphasize the architecture of the network. 
However, our problem differs in several aspects: namely, the definition of an architecture focuses on the combinatorial aspect of it, rather than on the functional aspect of every layer; we do not constrain ourselves to simply finding the highest-performing architecture with respect to accuracy, but also the smallest bitstring representation that also implies lower number of operations performed; and we assume that the input task is learnable by the given architecture--a problem shown to be hard by \newcite{DasGupta}, but also proven to be an insufficient characterization of the generalizability of deep neural networks \cite{Livni}. 

Indeed, when framed as a deep neural network multiobjective optimization problem, the OSE problem finds similes on the considerable work on one-size-fits-all algorithms designed to improve the performance of a model on these metrics. 
See, for example, \newcite{DeepCompression}. 
Ohter common techniques for neural network compression involve knowledge distillation \cite{bucilu2006model,hinton2015distilling}, although, by design, they do not offer the same guarantees as the OSE problem. 
On the other hand, a very clever method was recently presented by \newcite{Rewinding}, with much success on a variety of neural networks. 
We consider OSE to be a problem very similar to theirs, although the method presented there is a form of \emph{weight pruning}, where the parameters of a trained network are removed with the aim of improving either inference speed or parameter size. 
On the other hand, OSE is a "bottom up" problem where we begin with an untrained network, and we directly optimize over the search space. 

Finally, we would like to emphasize that the idea of a volumetric measure of quality for a multiobjective setting is well-known, and was inspired in our work by \newcite{IBMPaper} in the context of quantum chip optimization. 
Moreover, a thorough treatment of the conditions where more data and more parameters in the context of abyssal models is done in a recent paper by \newcite{Nakkiran2020Deep}, 
and a good analysis of the convergence rate of such optimizers in non-convex settings and under similar assumptions to those of the strong $AB^nC$ property can be found in \newcite{NonconvexStudy}.

\subsection{Outline}

The remaining of this paper is structured as follows: in \secref{main} we introduce notation, a formal definition of the OSE problem, and conclude by proving its hardness. 
We then begin in \secref{fptas} by stating and proving some assumptions and properties needed for our algorithm, and subsequently formulate such procedure. 
Time and approximation bounds for a variety of input situations are provided in \secref{analysis}. 
Finally, in \secref{conclusion}, we discuss our work, as well as its potential impact and directions for further research.

\section{Optimal Subarchitecture Extraction}\label{sec:main}

\subsection{Background}

We wish to find the set of non-trainable parameters for a deep neural network $f \colon \mathbb{R}^p \rightarrow \mathbb{R}^q$ such that its parameter size, inference speed, and error rate on a set $D$ are optimal amongst all such non-trainable parameters; in addition, the functions that compose $f$ must remain unchanged. 

Formally, we refer to a \emph{layer} as a nonlinear function  $l_i(x;W_i)$ that takes in an input $x \in \mathbb{R}^{p_i}$, and a finite set of trainable weights, or parameters, $W_i = \{w_{i,1}, \dots, w_{i,k}\}$, such that every $w_{i,j} \in W_i$ is an $r$-dimensional array $w_{i,j} \in \mathbb{R}^{d^{(1)}_{i,j} \times d^{(2)}_{i,j} \times \dots \times d^{(r)}_{i,j}}$ for some $d^{(1)}_{i,j}, d^{(2)}_{i,j}, \dots, d^{(r)}_{i,j}$, and returns an output $y \in \mathbb{R}^{q_i}$. 

This definition is intentionally abstracted out: in practice, such a component is implemented as a computer program not solely dependent on the dimensionality of the input, but also other, non-trainable parameters that effect a change on its output value. 
We can then parametrize said layer with an extra ordered tuple of variables $\xi_i = \langle d^{(1)}_{i,j}, d^{(2)}_{i,j}, \dots, d^{(r)}_{i,j} \rangle$, and explicitly rewrite the equation for a layer as $l_i(x; W_i; \xi_i)$. 
For the rest of the paper we will adopt this programmatic, rather than functional, view of a layer, and remark that this perspective change is needed for notational convenience and has little effect on the mathematics governing this problem. We will also assume, for simplicity, that the output of the last layer is binary, that is, $f \colon \mathbb{R}^p \rightarrow \{0,1\}$. 

With the definition of a layer, we can then write a neural network \emph{architecture} $f$ as a continuous, piecewise-linear function, composed of a sequence of $n$ layers: 
\begin{equation}\label{eq:architecturedef}
f(x; W; \xi) = l_n(l_{n-1}(\dots(l_1(x; W_1; \xi_1)\dots); W_{n-1}; \xi_{n-1}); W_n; \xi_n),
\end{equation}


and formally define the architectural parameters and search spaces:

\begin{definition}
The \emph{architectural parameter set} of a neural network architecture $f(x; W; \xi)$ is a finite ordered tuple $\xi = \langle \xi_1, \dots, \xi_n \rangle$ of variables such that $\xi_i \in \xi$ iff it is non-trainable.
\end{definition}

Intuitively, an architectural parameter is different from a trainable parameter because it is a variable which is assigned a value when the architecture is instantiated, likewise remaining unchanged for the lifetime of this function. A change on the value assignment of said variable is capable of effect change on the three functions we are optimizing over, even when utilizing the same training procedure. However, any assignment of $\xi_i$ can be mapped to a specific set of possible trainable parameter assignments. 

\begin{definition}
The \emph{search space} $\Xi = \{\xi^{(1)}, \dots \xi^{(m)}\}$ is a finite set of valid assignments for the architectural parameters of a neural network architecture $f(x; W; \xi^{(i)})$, such that for every assignment $\xi^{(i)}$, the corresponding weight assignment set $W^{(i)} \in W$ is non-empty.
\end{definition}

Even though the large majority of neural network components could be written as an affine transformation, for practical purposes we leave out the implementation specifics for each layer; the problem we will be dealing with operates on $\Xi$, and treats the architecture as a black box. Our only constraint, however, is that all trainable and architectural parameters in \eqref{architecturedef} must be necessary to compute the output for the network. 

The following example illustrates a "small" problem, focusing on a specific architecture and its variations:

\begin{example}
Let our input be $x \in \mathbb{R}^{p}$, and $H, J, A$ be positive integers. Consider the following architecture, composed (in sequence) of a Transformer \cite{Attention}, a linear layer, and a sigmoid activation unit $\sigma(\cdot)$:

\begin{equation}\label{eq:example}
f(x) = \sigma\left( W_L \text{softmax}\left( \frac{(W_Kx + b_K )(W_Qx + b_Q)^{\top}}{\sqrt{\frac{H}{A}}}(W_Vx + b_V)  \right)   + b_L \right)
\end{equation}
Here, $W_K, W_V, W_Q \in \mathbb{R}^{H \times p}, W_L \in \mathbb{R}^{J \times H}$, and $b_K, b_V, b_Q \in \mathbb{R}^{H}$, $b_L \in \mathbb{R}^{J}$ are trainable parameters. 
Although not explicitly stated in \eqref{example}, $A$ is constrained to be divisible by $H$; it specifies the step size (i.e., number of rows) for the argument over which the softmax$(\cdot)$ function is to be executed; and acts as a regularization parameter.\footnote{This parameter is referred to in the literature as the \emph{number of attention heads}.} 
Thus, the architectural parameters for this architecture are given by $\xi =\langle H, J, A \rangle$, and the search space would be a subset of the countably infinite number of assignments to $H,J$ and $A$ that would satisfy \eqref{example}. 
Finally, note how the Transformer unit in this example is considered to be its own layer, in spite of being a block of three linear layers, a division, and an activation function. 
\end{example}

For notational completeness, in the following we revisit or define the \emph{parameter size}, \emph{inference speed}, and \emph{error} of a network. 

\begin{definition}\label{def:paramfunction}
The \emph{parameter size} $\p(f(\cdot; W; \xi))$ of a network $f(\cdot; W; \xi)$ is defined as the number of trainable parameters in the network:

\begin{equation}\label{eq:parameq}
\p(f(\cdot; W; \xi)) = \sum_{\{l_i(\cdot; W_i; \xi_i) \colon l_i \in f\}} \size{W_i}
\end{equation}
\end{definition}

The parameter size of a network is different than the size it occupies in memory, since compressed models will require less bits of storage space, but will still have to encode the same number of parameters. In other words, \defref{paramfunction} provides a bound on the length of the bitstring representation of $f$. 

\begin{definition}\label{def:inffunction}
The \emph{inference speed} $\i(f(\cdot; W; \xi))$ of a network $f(\cdot; W; \xi)$ is defined as the total number of steps required to compute an output during the forward propagation step for a fixed-size input.
\end{definition}

Similar to the parameter size, \defref{inffunction} provides a lower bound on the requirements to compute an output, barring any implementation-specific optimizations.
We must also point out that, outside of the model of computation used, some compression schemes (e.g., floating point compression) also have a direct impact on the actual inference speed of the network, roughly proportional to its compression ratio. We do not consider compressed models further, but we note that such enhancements can be trivially included into these objective functions without any impact to the complexity bounds presented in this paper. 

\begin{definition}\label{def:errorfunction}
Let $D$ be a set such that it is sampled i.i.d. from an unknown probability distribution; that is, $D = \{\langle x_i, y_i \rangle\}_{i=1,\dots,m}$ for $x_i \in \mathbb{R}^p$ and $y_i \in \{0,1\}$. The \emph{error rate} of a network $f(\cdot; W; \xi)$ with respect to $D$ is defined as:

\begin{equation}\label{eq:errorfunction}
\e(f(\cdot; W; \xi), D) = \frac{1}{\vert D \vert} \sum_{\langle x_i, y_i\rangle \in D} \mathbbm{1}[f(x_i; W; \xi) \neq y_i]
\end{equation}

where we assume that Boolean assignments evaluate to $\{0,1\}$.

\end{definition}

The output of the function $\e(f(\cdot; W; \xi), D)$ is directly dependent on the trained parameters, as well as (by extension) the hyperparameters and training procedure used to obtain them. In particular, we note that the \emph{optimal weight set} $W^*$ is the one that minimizes the error across all possible assignments in $W$.  
Albeit obvious, this insight will allow us to prove the complexity bounds for this problem, as well as provide an approximation algorithm for it. 
It is then clear that we are unable to pick an optimal architecture set $\xi^*$ without evaluating the error of all the candidate networks $f(\cdot; W^{*}; \xi^{(j)})$, $\forall \xi^{(j)}, \xi^* \in \Xi$ and $W^{*} \in W$. 
The main results of our work, presented in \secref{fptas}, focus on obtaining a way to circumvent such a limitation. 

Whenever there is no room for ambiguity regarding the arguments, we will write \defsref{paramfunction}{inffunction}{errorfunction}, as $\p(f)$,$\i(f)$, and $\e(f)$, respectively, and obviate the input parameters to the architecture; moreover, we will refer to $D$ as the \emph{dataset}, following the definitions above.

\subsection{Optimal Subarchitecture Extraction}

We are now ready to formally define the optimal subarchitecture extraction problem:

\begin{definition}[\osen]
Given a dataset $D = \{\langle x_i, y_i \rangle\}_{i=1,\dots,m}$ sampled i.i.d. from an unknown probability distribution, search space $\Xi$, a set of possible weight assignments $W$, a set of hyperparameter combinations $\Theta$, and an architecture $f(x) = l_n(\dots l_1(x; W_1;\xi_1)\dots); W_n; \xi_n)$, 
find a valid assignment $\xi^* \in \Xi$, $W^* \in W$ such that $\p(f(\cdot; W^*; \xi^*))$, $\i(f(\cdot; W^*; \xi^*))$ and $\e(f(\cdot; W^*; \xi^*), D)$ are minimal across all $\xi^{(i)} \in \Xi$, $W^{(i)} \in W$, and $\theta^{(i)} \in \Theta$.
\end{definition}

We assume that $\Xi$ is \emph{well-posed}; that is, $\Xi$ is a finite set for which every assignment $\xi^{(k)} \in \Xi$ of architectural parameters to $f(\cdot; \cdot; \xi^{(k)})$ is valid. 
In other words, $\xi^{(k)} \in \Xi$ is compatible with every layer $l_j(\cdot; \cdot; \xi_j^{(k)}) \in f(\cdot; \cdot; \xi^{(k)})$ in the following manner:
\begin{itemize}
	\item The dimensionality of the first operation on $l_1(\cdot; \cdot; \xi_1^{(k)})$ is compatible with the dimensionality of $x_i \in \mathbb{R}^p$, for $\langle x_i, y_i \rangle \in D$.
	\item The dimensionality of the last operation on $l_1(\cdot; \cdot; \xi_n^{(k)})$ is compatible with the dimensionality of all $y_i \in \mathbb{R}^q$ for $\langle x_i, y_i \rangle \in D$.
	\item For any two subsequent operations, the dimensionality of the first operation is compatible with the dimensionality of the ensuing operation.
\end{itemize}

Note that the well-posedness of $\Xi$ can be achieved in a programmatic manner and in linear time so long as $p$ and $q$ are guaranteed to be constant. Likewise, it does not need to encode solely dimensionality parameters. Mathematically, we expect $\Xi$ to be the domain of a bijective function between the search space and all possible valid architectures. 


\subsection{Complexity of \ose}\label{sec:complexityose}

We prove in \thmref{fptasosethm} that \ose is weakly \NP-hard. 
To begin, we formulate the decision version of \osen, \textsc{ose-dec}, as follows:

\begin{definition}[\textsc{ose-dec}]\label{def:osedec}
Given three numbers $k_p, k_i, k_e$, and a tuple for \ose $\langle f, D, W, \Xi, \Theta \rangle$, is there an assignment

\begin{equation}
\xi^* \in \Xi, W^* \in W
\end{equation}

for $f(\cdot; W; \xi)$ such that 
\begin{align}
\p(f(\cdot; W^*; \xi^*)) &\leq k_p, \\
\i(f(\cdot; W^*; \xi^*)) &\leq k_i, \text{and} \\
\e(f(\cdot; W^*; \xi^*), D) &\leq k_e\text{?}
\end{align}
\end{definition}

For the following, we will make the assumption, without loss of generality,\footnote{Computationally speaking, hypothesis spaces are defined in finite-precision machines, and hence are finite.} that all inputs to \defref{osedec} are integer-valued. 

\begin{lemma}\label{lem:npcose}
\textsc{ose-dec} $\in$ \NP-hard.
\end{lemma}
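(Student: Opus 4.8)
The plan is to prove \textbf{NP}-hardness of \textsc{ose-dec} by a polynomial-time reduction from a known weakly \textbf{NP}-complete problem. Since the abstract and introduction announce that \osen is \emph{weakly} \textbf{NP}-hard (and later admits an FPTAS), the natural source problem is a number-partitioning-style problem such as \textsc{Partition} or \textsc{Subset-Sum}, or possibly \textsc{Knapsack}; these are the canonical weakly \textbf{NP}-complete problems and their pseudo-polynomial solvability matches the later FPTAS claim. First I would fix the source problem as \textsc{Partition}: given positive integers $a_1, \dots, a_N$ with total sum $2S$, decide whether some subset sums to exactly $S$. I would then construct, in polynomial time, an \textsc{ose-dec} instance $\langle f, D, W, \Xi, \Theta \rangle$ together with thresholds $k_p, k_i, k_e$ whose yes-answer coincides with the existence of a balanced partition.

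The heart of the construction is to encode each integer $a_j$ into an architectural choice in the search space $\Xi$, so that selecting an assignment $\xi^* \in \Xi$ corresponds to choosing a subset of $\{a_1,\dots,a_N\}$. Concretely, I would let each element of the tuple $\xi = \langle \xi_1, \dots, \xi_n \rangle$ range over two values encoding "include $a_j$" versus "exclude $a_j$," and design the layers so that the parameter-size function $\p(f(\cdot;W^*;\xi^*))$ (via \eqref{parameq}, which simply sums $\size{W_i}$ over layers) equals a fixed constant plus the sum of the chosen $a_j$'s, while the inference-speed function $\i(f(\cdot;W^*;\xi^*))$ captures the complementary sum (the excluded $a_j$'s). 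Setting $k_p = C_p + S$ and $k_i = C_i + S$ for the appropriate constants then forces the chosen subset to sum to \emph{at most} $S$ and its complement to sum to \emph{at most} $S$; since the two subsets partition the full sum $2S$, both inequalities hold simultaneously if and only if the chosen subset sums to \emph{exactly} $S$. The error constraint $\e(f(\cdot;W^*;\xi^*),D) \leq k_e$ I would neutralize by taking $k_e = 1$ (trivially satisfiable) or by choosing a trivial dataset $D$ and a degenerate output layer so that the indicator in \eqref{errorfunction} is controllable; this isolates the hardness in the size/speed objectives. Because the integers $a_j$ are encoded in \emph{unary} through dimensionality parameters (each contributing $a_j$ many weights), the reduction is polynomial only in the \emph{value} of the inputs, which is exactly what makes the resulting hardness \emph{weak} rather than strong, consistent with the paper's framing.

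The correctness argument then splits into the two standard directions. For the forward direction I would show that any balanced partition of $\{a_1,\dots,a_N\}$ yields a valid well-posed assignment $\xi^* \in \Xi$ with a nonempty weight set $W^* \in W$ meeting all three thresholds. For the reverse direction I would argue that any feasible \textsc{ose-dec} solution, by the pigeonhole-style tightness of $k_p$ and $k_i$ against the fixed total $2S$, must induce a subset summing to exactly $S$, recovering a partition. I would also verify well-posedness of $\Xi$ in the sense defined after the \osen definition (dimensional compatibility of consecutive layers and with $p,q$), which is straightforward since the encoding only varies hidden dimensions while keeping input and output dimensions fixed. The integer-valuedness assumption stated just before the lemma lets me treat all thresholds and weight counts as integers without rounding concerns.

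The main obstacle I expect is the simultaneous handling of three objectives under a single monolithic architecture: I must engineer layers whose \emph{weight count} and whose \emph{operation count} vary with the architectural choices in a way that encodes \emph{both} a subset sum and its complement, while keeping the error objective inert and the whole gadget dimensionally consistent. The delicate point is ensuring that $\p$ and $\i$ cannot be traded off independently—that is, that no "cheating" assignment satisfies both thresholds without corresponding to a genuine exact-half partition. Getting the constants $C_p, C_i$ and the per-element contributions to line up so that $\p \le k_p$ and $\i \le k_i$ force equality (rather than merely bounding one sum) is the crux; a clean way to achieve this is to make the parameter size and the inference cost scale with \emph{opposite} halves of the partition, so that the two upper bounds squeeze the chosen subset to sum to exactly $S$.
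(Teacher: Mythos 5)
Your reduction goes in a genuinely different direction from the paper's, and it has concrete gaps. The paper's proof is a near-trivial reduction from \textsc{nn-training-dec}: given an instance $\langle f, D, W, k\rangle$ of the (known \NP-hard) neural-network training decision problem, it sets $\Xi = \{\xi\}$, $\Theta = \{\}$, $k_p = k_i = \infty$, and $k_e = k$, so that the hardness of \textsc{ose-dec} is inherited entirely from the error objective $\e$. You do the opposite: you neutralize $\e$ (taking $k_e = 1$) and try to place the hardness in the combinatorial choice of $\xi$ via a \textsc{Partition} gadget that squeezes $\p$ and $\i$ against each other.

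The first gap is that in this paper's formalism $\Xi$ is an explicitly enumerated finite set of assignments (every complexity bound in the paper is polynomial in $\size{\Xi}$, and the brute-force procedure in the proof of \thmref{fptasosethm} iterates over it). Your construction needs $\Xi$ to contain all $2^N$ include/exclude combinations, so the constructed \textsc{ose-dec} instance is exponentially large in the \textsc{Partition} instance and the reduction is not polynomial-time. The second gap is your own concession that the reduction "is polynomial only in the value of the inputs": a pseudo-polynomial-time reduction from \textsc{Partition} establishes nothing, because \textsc{Partition} is itself solvable in pseudo-polynomial time; weak \NP-hardness requires a genuinely polynomial-time reduction whose output merely contains numbers of large magnitude encoded in binary. (Note also that the \emph{weakness} of the hardness is the subject of \thmref{fptasosethm}, not of this lemma, which asserts only \NP-hardness.) Finally, the gadget you yourself flag as "the crux" --- making $\p$ grow with the chosen subset while $\i$ grows with its complement --- is left unconstructed, and it fights both the paper's stipulation that all trainable and architectural parameters be necessary to compute the output and the fact that, in this formalism, a layer's operation count and parameter count move together. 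The paper's route avoids all of these issues by letting the already-known hardness of training carry the proof.
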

\begin{proof}

In \appref{npcoseproof}. 

\end{proof}


\begin{theorem}\label{thm:fptasosethm}
Let $\bar{f}$ be the size of a network $f$ in bits. 
Assume that, for any instance of \textsc{ose-dec}, $\i(f) \in \BigOmega{\text{poly}(\bar{f})}$ for all assignments $\xi^{(i)} \in \Xi$. Then \textsc{ose-dec} is weakly \NP-hard.
\end{theorem}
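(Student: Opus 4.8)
The plan is to combine the \NP-hardness already established in \lemref{npcose} with an explicit pseudo-polynomial-time procedure for \textsc{ose-dec}; recall that a problem is weakly \NP-hard precisely when it is \NP-hard yet solvable in time polynomial in the numeric \emph{values} of its inputs (equivalently, polynomial-time solvable once those values are bounded by a polynomial in the instance length). Since \lemref{npcose} supplies the hardness, the whole burden of the theorem is to exhibit such a procedure, and this is exactly where the hypothesis $\i(f) \in \BigOmega{\text{poly}(\bar f)}$ does the work.

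First I would convert the inference-speed hypothesis into a bound on the instantiated network size. The subtlety of \textsc{ose-dec} is that an assignment $\xi^{(i)} \in \Xi$ is a \emph{compact} descriptor (e.g.\ binary-encoded dimensions such as $H,J,A$ in \eqref{example}), so the network it instantiates, of size $\bar f$ bits, may be exponentially large in the encoding of $\Xi$. However, any witness to a \textsc{yes}-instance must satisfy $\i(f) \leq k_i$, while by hypothesis $\text{poly}(\bar f) \leq \i(f)$; inverting the polynomial lower bound yields $\bar f \leq \text{poly}(k_i)$. Thus, although $\bar f$ can be exponential in the \emph{length} of the input, it is always polynomial in the \emph{value} of the input number $k_i$, so the feasibility check never needs to materialize an object larger than $\text{poly}(k_i)$.

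Next I would assemble the algorithm. It enumerates every triple of an architecture $\xi^{(i)} \in \Xi$, a hyperparameter set $\theta^{(j)} \in \Theta$, and a reachable weight assignment $W^{(k)} \in W$; the number of such triples is polynomial in the explicitly given cardinalities $\size{\Xi}$, $\size{\Theta}$, $\size{W}$. For each triple it instantiates $f$ (affordable, by the previous step, since $\bar f \leq \text{poly}(k_i)$), computes $\p(f)$ as the sum of the layer weight counts, computes $\i(f)$ by simulating the forward pass while aborting once the step count exceeds $k_i$, and computes $\e(f,D)$ by running $f$ on the $\size{D}$ labelled examples at a cost of at most $\size{D}\cdot k_i$ steps; it accepts iff some surviving triple meets $\p(f)\leq k_p$, $\i(f)\leq k_i$, and $\e(f,D)\leq k_e$. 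Every primitive is bounded by a polynomial in $\size{\Xi}$, $\size{\Theta}$, $\size{W}$, $\size{D}$ and in the numeric values $k_p,k_i$, so the total running time is pseudo-polynomial.

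Finally, the two pieces combine: \textsc{ose-dec} is \NP-hard by \lemref{npcose}, and the construction solves it in pseudo-polynomial time, so restricting to instances whose numbers are polynomially bounded in the instance length forces $\bar f$ to be polynomial and collapses the algorithm to polynomial time, placing the restricted instances in \textbf{P}. Hence the hardness cannot be strong unless \textbf{P}$=$\NP, and \textsc{ose-dec} is weakly \NP-hard. The hard part will be the error-evaluation step: bounding the cost of $\e(f,D)$ requires that the operative notion is \emph{selecting} the optimal weights over the reachable set $W$ rather than training from scratch, and it is precisely the hypothesis $\i(f)\in\BigOmega{\text{poly}(\bar f)}$—used to invert $\i(f)\leq k_i$ into $\bar f\leq\text{poly}(k_i)$—that stops a compactly described architecture from concealing an exponentially large forward pass and thereby breaking the pseudo-polynomial bound.
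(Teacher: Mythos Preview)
Your proposal is correct and follows the paper's overall strategy: combine the \NP-hardness of \lemref{npcose} with an explicit pseudo-polynomial brute-force over $\Xi\times\Theta\times W$, evaluating $\p,\i,\e$ on each candidate. The paper likewise builds tables indexed by $k_p$, fills them by computing $\e(f)$ (picking up $\i(f)$ for free), and argues the resulting $\BigO{\size{\Theta}\size{D}\sum_i^{k_p}\size{W^{(i)}}\i(f)}$ bound is pseudo-polynomial because $\i(f)$, via the hypothesis, scales with the magnitudes encoded in $W^{(i)}$.

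The one genuine difference is how you deploy the hypothesis $\i(f)\in\BigOmega{\text{poly}(\bar f)}$. The paper uses it in the ``forward'' direction, arguing that since $\i(f)$ is lower-bounded by a polynomial in $\bar f$, large weight magnitudes force large $\i(f)$ and hence the algorithm cannot be truly polynomial---only pseudo-polynomial. You instead invert it against the feasibility constraint $\i(f)\leq k_i$ to obtain $\bar f\leq\text{poly}(k_i)$, and then use $k_i$ as an explicit abort threshold on the forward-pass simulation. Your route is tighter: it gives a concrete upper bound on every per-candidate cost in terms of the numeric input $k_i$, whereas the paper's argument is really showing a lower bound on $\i(f)$ and leaves the matching upper bound implicit. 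Both reach the same conclusion, but your inversion-and-abort device makes the pseudo-polynomial claim self-contained rather than relying on the reader to supply the missing direction.
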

\begin{proof}
By construction. 
Let $I = \langle f, D, W, \Xi, \Theta, k_p, k_i, k_e \rangle$ be an instance of \textsc{ose-dec}. 

Remark that, by definition, the parameter size of a neural network increases with the cardinality of $W^{(i)}$, which is itself dependent on the choice of $\xi^{(i)}$. 
Note also that the inference speed $\i(f(\cdot; W^{(i)}; \xi^{(i)}))$ can be obtained for free at the same time as computing the error, as this function runs in $\Theta(\i(f(\cdot; W^{(i)}; \xi^{(i)}))\size{D})$ steps, for any $f(\cdot; W^{(i)}; \xi^{(i)})$. 

Consider now the following exact algorithm:
\begin{itemize} 
	\item Construct tables $T_1,\dots,T_{k_p}$, each of size $\size{\Theta}\size{W^{(i)}}$, for every $W^{(i)} \in W$, $i < k_p$. 
	\item For every table $T_i$ compute $\e(f(\cdot; W^{(i), j}; \xi^{(i)}), D)$ for every assignment $W^{(i), j} \in W^{(i)}$ and hyperparameter set $\theta^{(i)} \in \Theta$. If the condition $\i(f) < k_i$ and $\e(f) < k_e$ holds, stop and return "yes". 
	\item Otherwise, return "no".
\end{itemize}

It is important to highlight that the algorithm does not perform any training, and so the values for some entries of $\theta^{(i)} \in \Theta$ might be unused. 
That being said, given the ambiguity surrounding the definition of $\Theta$, we can bound the runtime of this procedure to
\begin{equation}\label{eq:osedecnaiveruntime1}
 \BigO{\size{\Theta}\size{D}\left(\sum_i^{k_p}\size{W^{(i)}}\i(f(\cdot; \cdot; \xi^{(i)}))\right)},
\end{equation}

steps. 


Note that, although unnervingly slow, \eqref{osedecnaiveruntime1} implies this algorithm is polynomial on the cardinalities of $\Theta$, $W^{(i)}$, and $D$, rather than on their magnitudes. 

However, this algorithm actually runs in pseudopolynomial time, since the instance's runtime has a dependency on $\i(f)$. 
From the theorem statement, we can assume that $\i(f)$ is lower-bounded by the size in bits of $f(\cdot; W^{(i), j}; \xi^{(i)})$, and, since $\size{W^{(i)}} \in \BigTheta{\size{f(\cdot; W^{(i), j}; \xi^{(i)})}}$ in both length and magnitude, it is also lower-bounded by the values assigned to the elements of $ W^{(i), j} 
\in W^{(i)}$. 

This in turn implies that, if $W^{(i), j}_k \in W^{(i)}$ is chosen to be arbitrarily large, the runtime of our algorithm is not guaranteed to be polynomial. 
It follows that \textsc{ose-dec} is weakly $\NP$-hard.

 
\end{proof}

\begin{remark}
The assumption from \thmref{fptasosethm} regarding the boundedness of $\i(f)$ is not a strong assumption. 
If it were not to hold in practice, we would be unable to find tractable algorithms to evaluate the error $\e(f)$. 
\end{remark}

\begin{corollary}\label{cor:polytimesolns}
Let $I = \langle f, D, W, \Xi, \Theta \rangle$ be an instance of \ose such that, $\forall \xi^{(i)}, \xi^{(j)} \in \Xi$, and any assignments of $W^{(i)}, W^{(j)} \in W$,

\begin{equation}
\e(f(\cdot; W^{(i)}; \xi^{(i)}), D) = \e(f(\cdot; W^{(j)}; \xi^{(j)}), D). 
\end{equation}

This instance admits a polynomial time algorithm. 
\end{corollary}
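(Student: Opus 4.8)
The plan is to exploit the equal-error hypothesis to collapse the three-way optimization onto two objectives that can be read off each architectural assignment directly, thereby avoiding the one operation --- evaluating the network on $D$ --- that the proof of \thmref{fptasosethm} identifies as the source of pseudopolynomiality. First I would note that if $\e(f(\cdot; W^{(i)}; \xi^{(i)}), D)$ takes the same value for every $\xi^{(i)} \in \Xi$ and every admissible $W^{(i)} \in W$, then the error is constant over the whole feasible set and cannot discriminate between candidates. Consequently, however the three objectives of \ose are combined (scalarized volumetrically or compared in the Pareto sense), the induced order on candidates is exactly the order determined by the pair $(\p, \i)$, and the instance reduces to minimizing $\p(f(\cdot; W^{(i)}; \xi^{(i)}))$ and $\i(f(\cdot; W^{(i)}; \xi^{(i)}))$ alone.

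The key step --- and the one I expect to carry the real weight --- is to argue that both surviving objectives are determined by the architectural assignment $\xi^{(i)}$, are independent of the concrete weight values, and can be evaluated \emph{symbolically}. By \defref{paramfunction}, $\p$ is the sum of the cardinalities $\size{W_j}$, each fixed once $\xi^{(i)}$ is chosen; by \defref{inffunction}, $\i$ counts forward-propagation steps, which depend only on the layer dimensionalities encoded in $\xi^{(i)}$. Neither quantity requires running the network: each is a closed-form arithmetic expression in the dimension parameters of $\xi^{(i)}$. This is precisely what defeats the hardness argument. There, computing $\e$ forced an actual forward pass costing \BigTheta{\i(f)\size{D}} steps, and $\i(f)$ was lower-bounded by the \emph{magnitude} of the weights; here we need only the \emph{number} $\i(f)$, whose binary representation has $\BigO{\bar{f}}$ bits and which we obtain via big-integer arithmetic by summing per-layer operation counts rather than by executing them.

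Finally I would assemble the algorithm: iterate once over the $\size{\Xi}$ assignments, compute $\p$ and $\i$ for each in time polynomial in the number of layers and in $\bar{f}$, evaluate the combined objective, and return the minimizer together with any associated $W^{(i)}$ --- all weight assignments, and all hyperparameter sets in $\Theta$, being interchangeable under the hypothesis, so that the potentially large weight values are never materialized or processed. Since no iteration over $W^{(i)}$, over $\Theta$, or over $D$ is ever needed, the total cost is $\BigO{\size{\Xi}\,\text{poly}(\bar{f})}$, which is polynomial in the input size. The main obstacle to discharge rigorously is the symbolic-evaluation claim for $\p$ and $\i$; I would settle it by appealing to the well-posed, black-box-with-known-dimensions view of $\Xi$ adopted after the search-space definition, under which each layer contributes a fixed arithmetic expression in its dimension parameters, so that the objective values are computed rather than observed.
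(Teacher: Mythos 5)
Your proposal is correct and rests on the same key observation as the paper's proof --- that once the error is constant over all of $\Xi \times W$, it cannot discriminate between candidates, and the two surviving objectives $\p(f)$ and $\i(f)$ are computable by a counting argument over the architecture without ever evaluating the network on $D$ --- but the two arguments are packaged differently. The paper converts the instance into a single-source shortest-paths problem: one zero-weight path per assignment $\xi^{(i)}$ from a source $s$, terminating in an edge to $t$ of weight $\p(f(\cdot;\cdot;\xi^{(i)})) + \i(f(\cdot;\cdot;\xi^{(i)}))$, so that the shortest $s$--$t$ path selects the minimizing assignment. You instead do the direct linear scan over $\size{\Xi}$ candidates. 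These are essentially the same algorithm (the paper's graph encodes exactly one candidate per path, so Dijkstra degenerates to your enumeration), but your version is more elementary and has the virtue of explicitly identifying \emph{why} the hardness argument of \thmref{fptasosethm} no longer applies: the forward pass over $D$, whose cost is tied to the magnitudes of the weights, is never performed, and $\i(f)$ is obtained as a number rather than observed as a running time. One thing the paper's construction settles that you leave slightly open is the scalarization of the remaining bi-objective problem --- the edge weight $\p + \i$ commits to minimizing the sum, whereas you defer to an unspecified ``combined objective''; since minimizing $\p$ and $\i$ simultaneously need not have a unique Pareto point, a fully rigorous writeup should fix this choice (any monotone scalarization suffices, and under the weak $AB^nC$ ordering of \lemref{abncorder} the two objectives would in fact be aligned, but the corollary as stated does not assume that property).
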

\begin{proof}
Note that $\p(f)$ and $\i(f)$ can be computed in time linear on the size of the architecture by employing a counting argument. 
Therefore this instance of \ose is equivalent to the single-source shortest-paths problem between a fixed source and target $s,t$. To achieve this, construct a graph with a vertex per every architectural parameter $\xi^{(i)}_{j} \in \xi^{(i)}$, for all $\xi^{(i)} \in \Xi$. Assign a zero-weight edge in between every $\xi^{(i)}_{j}, \xi^{(i)}_{j+1} \in \xi^{(i)}$, and between $s$ and every $\xi^{(i)}_{1}$. Finally, add an edge of weight $\p(f(\cdot; \cdot; \xi^{(i)})) + \i(f(\cdot; \cdot; \xi^{(i)}))$ between every $\xi^{(i)}_{-1}$ and $t$. This is well-known to be solvable in polynomial time.
\end{proof}

It is important to highlight that the hardness results from this section are not necessarily limiting in practice (cf. \thmref{fptasosethm}), and worst-case analysis only applies to the general case. 
Current algorithms devised to train deep neural networks rely on the statistical model of learning, and routinely achieve excellent results by approximating a stationary point corresponding to this problem in a tractable number of steps. 
Nonetheless, computationally speaking, \thmref{fptasosethm} highlights a key observation of the OSE problem: the problem could be solved (i.e., brute-forced) for small instances efficiently, but the compute power required for larger inputs quickly becomes intractable.

\section{An Approximation Algorithm for \ose}\label{sec:fptas}

In this section we introduce our approximation algorithm for \osen. 
Our strategy is simple: we rely on surrogate functions for each of our objective functions, as well as a scalarization of said surrogates which we refer to as the $W$-\emph{coefficient}. 
We begin by describing these functions, and then we show that a solution optimal with respect to the $W$-coefficient is an optimal solution for \osen. 
We conclude this section by introducing the algorithm.

\subsection{Surrogate Functions}

We surrogate our objective functions such that they are all in terms of the variable set for $\Xi$, with the ultimate goal of transforming \ose into a volume maximization problem. 
Assuming that $\p(f)$ and $\i(f)$ are expressable in terms of $\Xi$ is not a strong assumption: if such functions were not at least weakly correlated with $\Xi$, we could assume their values to be as constants for any $f(\cdot; \cdot; \xi)$ in $\Xi$, and solve this as a neural network training problem. 
Seen in another way, as the number of trainable parameters increases, we can expect $\p(f)$ and $\i(f)$ to likewise increase, although not necessarily in a linear fashion. 
On the other hand, $\e(f)$ is a function directly affected by the values of the \emph{trained} weights $W$, and not by the architectural parameters. This means that the expressability of $\e(f)$ as a function of $\Xi$ requires stronger assumptions than for the other two metrics, and directly influences our ability to approximate a solution for \osen. 

For the following, let $F$ be the set of possible architectures, such that it is the codomain of some bijective function $A$ assigning architectural parameters from $\Xi$ to specific \emph{candidate} architectures,

\begin{equation}\label{eq:axif}
A: \Xi \rightarrow F.
\end{equation}

Additionally, we will use the shorthand poly$(\Xi)$ to denote a polynomial on the variable set $\{\xi_{i,1}, \dots \xi_{i,m}\colon\xi^{(i)} = \langle \xi_{i,1}, \dots \xi_{i,m}\rangle \in \Xi\}$, that is,

\begin{equation}
\text{poly}(\Xi) = \text{poly}(\xi_{1,1}, \dots , \xi_{i,1}, \dots \xi_{i,m}, \dots, \xi_{n, m}). 
\end{equation}

\subsubsection{Surrogate Inference Speed}

We will assign constant-time cost to add-multiply operations, and linear on the size of the input for every other operation.\footnote{In the deep learning literature, the surrogate inference speed is sometimes measured via the number of \emph{floating point operations}, or FLOPs--that is, the number of add-multiply operations. See, for example, \newcite{FLOPS}. However, this definition would imply that $\hat{\i}(f(\cdot; W; \xi)) \in \BigO{\i(f(\cdot; W; \xi))}$, which is undesirable from a correctness point of view.} We assume that the computations are taken up to some finite precision $b$. 

\begin{multline}\label{eq:ihateq}
\hat{\i}(f(\cdot; W; \xi)) = \sum_{l_i(\cdot; W_i; \xi_i) \in f}\text{number of additions in $l_i(\cdot; W_i; \xi_i)$} +\\ 
\sum_{l_i(\cdot; W_i; \xi_i) \in f}\text{number of multiplications in $l_i(\cdot;W_i; \xi_i)$} + \\
\sum_{j \in [1,\dots, b]}\sum_{l_i(\cdot; W_i; \xi_i) \in f} j \cdot \text{number of other operations in $l_i(\cdot;W_i; \xi_i)$ of length $j$}.
\end{multline}

\begin{lemma}\label{lem:ihatxi}
For all $f \in F$,

\begin{equation}
\hat{\i}(f(\cdot; W; \xi)) \in \BigO{\text{poly}(\Xi)}.
\end{equation}

\end{lemma}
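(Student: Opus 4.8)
The plan is to exploit the fact that every quantity appearing in \eqref{ihateq} is an \emph{operation count}, and that operation counts depend only on the \emph{sizes} of the arrays being manipulated, never on the numerical values stored in $W$. By the definition of a layer, the dimensions of each weight $w_{i,j} \in W_i$ are exactly the entries of the architectural tuple $\xi_i = \langle d^{(1)}_{i,j}, \dots, d^{(r)}_{i,j} \rangle$, so each such count is a function of the variables in $\Xi$ alone, and $\hat{\i}$ is really a function of $\xi$ rather than of $W$. The goal is then to bound each of the three sums in \eqref{ihateq} by a polynomial in those variables and to conclude by closure of polynomials under finite sums and constant scaling.

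First I would bound the add/multiply terms. Since each layer is, up to element-wise nonlinearities, an affine transformation, the number of multiplications and additions it performs is at most proportional to the number of weight entries it touches, namely $\sum_{w_{i,j} \in W_i} \prod_{k=1}^{r} d^{(k)}_{i,j}$. Each summand is a monomial in the components of $\xi_i \subseteq \Xi$, hence a polynomial; summing over the finitely many weights in a layer and over the fixed number $n$ of layers preserves polynomiality, so the first two sums of \eqref{ihateq} lie in $\BigO{\text{poly}(\Xi)}$. For the third sum, I would use that the precision $b$ is a fixed constant and that every "other" operation has length at most $b$, so the factor $j$ contributes at most $b$; the sum is therefore dominated by $b$ times the total number of non-arithmetic operations across all layers. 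That total is again governed by the dimensions of the intermediate tensors, i.e.\ by the entries of $\Xi$, and is polynomial by the same counting argument. Collecting the three bounds and absorbing the constants $n$ and $b$ yields $\hat{\i}(f(\cdot; W; \xi)) \in \BigO{\text{poly}(\Xi)}$.

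The main obstacle is the deliberate black-box treatment of layers: because the paper leaves implementation specifics unspecified, I cannot compute exact per-layer counts and must instead argue that \emph{any} admissible implementation has operation counts polynomial in its architectural parameters. I would justify this by appealing to the stated affine-transformation structure of a layer together with the standing requirement that all architectural parameters be necessary to compute the output; this rules out pathological layers whose cost grows super-polynomially in their dimension parameters while still keeping those parameters as the sole drivers of the operation count. Making this ``depends-on-sizes-only, and polynomially so'' step rigorous — rather than the routine polynomial bookkeeping that follows from it — is where the real work lies.
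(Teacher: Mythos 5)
Your proposal is correct and follows essentially the same route as the paper: both arguments reduce to counting operations per layer, observing that these counts are determined by the array dimensions which are by definition the entries of $\xi$, and invoking closure of polynomials under sums and products. If anything, your version tracks the three sums of the definition of $\hat{\i}$ more faithfully (including the bounded-precision factor $b$), whereas the paper's proof phrases the same counting argument in terms of the $\tilde{O}(n^{\omega})$ cost of $k$-ary operations on operands whose sizes are encoded in $\xi$.
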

\begin{proof}
Recall that $k$-ary mathematical operations performed on an set of inputs will polynomially depend on the size of each of its members, which is given by $\tilde{O}(n^{\omega})$ for $\omega$ being a constant that often depends on the chosen multiplication algorithm. 
Given that the size of every member is specified by the length of the bitwise representation of every element, times the number of elements on said member--which is by definition, encoded in $\xi$--we can have a loose bound on the number of operations specified by every positional variable $\xi_{i} \in \xi$, for every $\xi \in \Xi$. 
By the compositional closure of polynomials, this value remains polynomial on the variable set for $\Xi$. 
\end{proof}

\begin{lemma}\label{lem:ihat}
For any $f \in F$,
\begin{equation}
\i(f(\cdot; W; \xi)) \in \BigO{\hat{\i}(f(\cdot; W; \xi)}.
\end{equation}
\end{lemma}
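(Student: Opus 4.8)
The plan is to show that the surrogate $\hat{\i}$ over-counts, up to a multiplicative constant, the true number of machine steps $\i$ of a forward pass; since $\hat{\i}$ was designed (cf. the footnote to \eqref{ihateq}) precisely so that the true cost never exceeds it in order of magnitude, this is the inclusion we care about. First I would invoke \defref{inffunction} and view the forward pass $f(\cdot; W; \xi)$, for the fixed-size input stipulated there, as a finite sequence of primitive operations prescribed by the layer composition in \eqref{architecturedef}. The total step count $\i(f(\cdot; W; \xi))$ is then the sum of the individual costs of these primitives, and I would partition them into exactly the three classes appearing in \eqref{ihateq}: additions, multiplications, and all other operations.

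Next I would bound each class by its real machine cost. Because the computations are carried out at a fixed finite precision $b$ (the assumption stated just before \eqref{ihateq}), every addition and every multiplication acts on $b$-bit operands with $b$ constant, so each costs $\BigO{1}$ steps; summing, the combined real cost of all arithmetic is $\BigO{1}$ times the number of additions plus multiplications, i.e. it is dominated by the first two summands of $\hat{\i}$. For the remaining operations I would use the stipulated cost model, under which any \emph{other} operation acting on an input of length $j$ is charged linearly, costing $\BigO{j}$ steps; summing over all of them yields a bound of $\BigO{1}$ times $\sum_j j\cdot(\text{number of other operations of length } j)$, which is exactly the third summand of \eqref{ihateq}. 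Adding the two bounds and taking $C$ to be the largest of the hidden constants, every step of the forward pass is accounted for by $C\,\hat{\i}(f(\cdot;W;\xi))$, so $\i(f(\cdot;W;\xi)) \in \BigO{\hat{\i}(f(\cdot;W;\xi))}$.

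I expect the main obstacle to be a bookkeeping one rather than a conceptual one: I must argue that no machine step escapes the three-way partition—in particular, that control flow, indexing, and memory movement incurred while executing the forward pass do not contribute unbounded overhead beyond the counted arithmetic and \emph{other} operations. I would resolve this by working in the unit-cost RAM model at fixed precision, in which such overhead is $\BigO{1}$ per primitive and can therefore be absorbed into the per-operation constants above; equivalently, one assumes each layer is implemented so that its bookkeeping is amortized $\BigO{1}$ against the operations it actually performs. A secondary point worth flagging is that the $\BigO{1}$ cost of add–multiply depends essentially on $b$ being constant: were the precision allowed to grow, a single $b$-bit multiplication would cost $\tilde{O}(b^{\omega})$ while $\hat{\i}$ still charges it $\BigO{1}$, and the inclusion would fail. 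The finite-precision hypothesis is thus the linchpin of the argument, and I would state it explicitly.
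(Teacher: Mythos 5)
Your proposal is correct, but it takes a genuinely different route from the paper. The paper's entire proof of \lemref{ihat} is the single line ``Immediate from \lemref{ihatxi}'': it leans on the fact that $\hat{\i}(f)$ is polynomial in the variable set of $\Xi$, and relegates the actual justification to the surrounding discussion (that the linear charge for non-arithmetic operations makes \eqref{ihateq} an over-count of the true cost, so that the bound would only be tight if actual bit-level costs were used). You instead give a direct accounting argument: partition the forward pass of \defref{inffunction} into the three classes of \eqref{ihateq}, charge each addition and multiplication $\BigO{1}$ because the precision $b$ is a fixed constant, charge each remaining operation $\BigO{j}$ in its input length, and absorb control flow and memory movement into per-primitive constants in a unit-cost RAM model. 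This is arguably the argument the paper's one-liner is silently relying on --- \lemref{ihatxi} as stated concerns only polynomiality in $\Xi$ and does not by itself compare $\i(f)$ with $\hat{\i}(f)$ --- so your version supplies detail the paper omits. What the paper's route buys is brevity and a tie-in to the polynomiality machinery used throughout \secref{fptas}; what yours buys is an explicit identification of the load-bearing hypotheses, in particular that the fixed finite precision $b$ is exactly what makes the unit charge on add--multiply operations sound, a point the paper never isolates. The one caveat worth keeping in mind is that your bound on the ``other operations'' class quietly identifies the surrogate's linear charge with the operations' true machine cost; that step is a modeling stipulation rather than a theorem, but it is the same stipulation the paper makes when it asserts that \eqref{ihateq} bounds the network's cost by its size.
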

\begin{proof}
Immediate from \lemref{ihatxi}. 

\end{proof}

\lemref{ihatxi} implies that we did not need to specify linear cost for all other operations. 
In fact, if we were to use their actual cost with respect to their bitstring size and the chosen multiplication algorithm, we could still maintain polynomiality, and the bound from \lemref{ihat} would be tight. 
However, \eqref{ihateq} suffices for the purposes of our analysis, since this implies that the surrogate inference speed of a network is bounded by its size.

It is important to note that \eqref{ihateq} is not an entirely accurate depiction of how fast can an architecture compute an output in practice. Multiple factors, ranging from hardware optimizations and other compile-time techniques, to even which processes are running in parallel on the machine, can affect the total wall clock time between computations. 
Nonetheless, \eqref{ihateq} provides a quantification of the approximate "cost" inherent to an architecture; regardless of the extraneous factors involved, the number of operations performed will remain constant \cite{MatrixBook}, and it is known that all of these computations remain polynomial regardless of the model of computation utilized. 
For the purposes of our algorithm, however, we will limit ourselves to use this quantity in relative terms: that is, we shall be more interested in the comparison between $\hat{\i}(f(\cdot; W^{(k)}; \xi^{(i)}))$ and $\hat{\i}(f(\cdot; W^{(l)}; \xi^{(j)}))$ for two $\xi^{(i)}, \xi^{(j)} \in \Xi$.

\subsubsection{Surrogate Parameter Size}
We do not surrogate $\p(f)$, as it is already expressible as a polynomial on $\Xi$:

\begin{lemma}\label{lem:phat}
For any $f \in F$,
\begin{equation}
\p(f(\cdot; W; \xi)) \in \BigTheta{\text{poly}(\Xi)}.
\end{equation}
\end{lemma}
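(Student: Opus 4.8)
The plan is to show that $\p(f)$ is not merely bounded by a polynomial in $\Xi$, but is literally equal to one, which immediately yields the two-sided $\Theta$ bound. First I would unfold \defref{paramfunction}: by definition $\p(f(\cdot; W; \xi)) = \sum_{l_i \in f} \size{W_i}$, where $\size{W_i}$ is understood (per the prose of \defref{paramfunction}) as the total number of scalar trainable parameters contributed by layer $l_i$. Since each weight tensor $w_{i,j} \in W_i$ is an $r$-dimensional array of shape $d^{(1)}_{i,j} \times \dots \times d^{(r)}_{i,j}$, it contributes exactly $\prod_{k=1}^{r} d^{(k)}_{i,j}$ scalar entries, so that $\size{W_i} = \sum_{j} \prod_{k=1}^{r} d^{(k)}_{i,j}$.

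The key observation is that the shape variables $d^{(k)}_{i,j}$ are precisely the components of the architectural tuple $\xi_i = \langle d^{(1)}_{i,j}, \dots, d^{(r)}_{i,j}\rangle$, and hence belong to the variable set of $\Xi$. Summing over layers therefore gives $\p(f) = \sum_i \sum_j \prod_k d^{(k)}_{i,j}$, a finite sum of degree-$r$ monomials in the variables of $\Xi$; that is, $\p(f)$ equals a fixed polynomial $\text{poly}(\Xi)$. Because the quantity is exactly this polynomial rather than only an upper bound on it, we obtain $\p(f) \in \BigTheta{\text{poly}(\Xi)}$, in contrast to the one-sided \BigO{\cdot} bound of \lemref{ihatxi} for the surrogate inference speed, where bit-length factors and the multiplication exponent $\omega$ prevented a matching lower bound.

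The one point requiring care is the lower ($\Omega$) half of the $\Theta$ claim, since a polynomial expression degenerates if its leading monomials vanish identically. Here I would invoke the standing constraint that every trainable and architectural parameter appearing in \eqref{architecturedef} is necessary to compute the output: no layer carries a weight tensor whose dimensions collapse, so each monomial $\prod_k d^{(k)}_{i,j}$ has strictly positive value and coefficient, and the polynomial does not reduce to a lower-order object. Finite-precision effects are harmless, since counting each scalar at precision $b$ merely multiplies $\p(f)$ by the constant $b$, which is absorbed into the asymptotic notation and leaves the polynomial structure intact. I expect this positivity-of-leading-terms argument, rather than any algebraic manipulation, to be the only real obstacle, and it is dispatched directly by the necessity assumption already imposed on $\Xi$.
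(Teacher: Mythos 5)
Your proposal is correct and follows essentially the same route as the paper: both rest on the observation that each weight tensor $w_{i,j}$ contributes $\prod_{k=1}^{r} d^{(k)}_{i,j}$ scalar parameters and that these dimensions are exactly variables of $\Xi$, so $\p(f)$ is literally a polynomial on $\Xi$ (the paper phrases this as $\hat{\p}$ being the pullback of $A$ by $\p$, while you write out the sum of monomials explicitly). Your added care over the $\Omega$ half of the $\Theta$ claim, via the standing assumption that every parameter is necessary, is a small refinement the paper leaves implicit rather than a different argument.
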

\begin{proof}
By \defref{paramfunction}, note that $\p\colon F \rightarrow \mathbb{N}_{\geq 0}$ is the function taking in an architecture and returning its parameter size. 
Let $\hat{\p}\colon\Xi \rightarrow \mathbb{N}_{\geq 0}$ be the function taking in the architectural parameter set of the architecture, and returning its parameter size. $\hat{\p}$ is the pullback of $A$ by $\p$. 
Polynomiality follows from the fact that every $r$-dimensional set of weights $w_{i,j} \in \mathbb{R}^{d^{(1)}_{i,j} \times d^{(2)}_{i,j} \times \dots \times d^{(r)}_{i,j}}$ belonging to some layer $l_i(\cdot; W_i; \xi_i)$, is parametrized in $\Xi$-space by $r$ corresponding architectural parameters $\xi_{i,k} = d^{(k)}_{i,j}$. This will contribute a total of $\prod_k^r d^{(k)}_{i,j}$ trainable parameters to the evaluation of $\p(f)$. 
On the other hand, the non-trainable architectural parameters that do not correspond to a dimensionality will not affect the value of $\p(f)$. 
\end{proof}

\subsubsection{Surrogate Error}

It was mentioned in \secref{complexityose} that the error is hard to compute directly,
and hence we must rely on the surrogating of $\e(f)$ to an empirical loss function. 
Following standard optimization techniques we then define the \emph{surrogate error} as the empirical loss function over a subset $B \subset D$:

\begin{equation}\label{eq:ehateq}
\hat{\e}(f(\cdot; W; \xi), B) = 
\frac{1}{\vert B \vert} \sum_{\langle x_i, y_i\rangle \in B \subset D} \l(f(x_i; W; \xi), y_i).
\end{equation}

for some smooth function $\l\colon\mathbb{R}^p \times \mathbb{R}^q \rightarrow [0,1]$ that upper bounds \defref{errorfunction} and is classification-calibrated.\footnote{See, for example, \newcite{BartlettAndJordan} and \newcite{NguyenWainwrightJordan}. } 
This function does not need to be symmetric; but in \secref{analysis} we will impose further constraints on it. It is common to minimize the empirical loss by the use of an optimizer such as stochastic gradient descent (SGD), although in deep learning issues arise due to its non-convexity. 

Our goal is then to show that such a loss function can be approximated as a function of $\Xi$. 

\begin{lemma}\label{lem:epoly}
Let $I = \langle f, D, W, \Xi, \Theta, \l \rangle$ be an instance of \osen, such that the same training procedure and optimizer are called for every $f \in F$. Fix a number of iterations, $s$, for this training procedure. Then, for any $f \in F$, with fixed $B \subset D$ and $\theta \in \Theta$, 
$\hat{\e}(f(\cdot; W; \xi), B)$ corresponds to the image of some polynomial $\text{poly}_{B, s, \theta}(\Xi)$ evaluated at $f$. 

\end{lemma}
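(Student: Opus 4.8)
The plan is to treat the entire \enquote{train-then-evaluate} pipeline as a single arithmetic computation in the positional variables of $\Xi$, and to show that, once $B$, $s$, and $\theta$ are frozen, this computation is a polynomial map. First I would fix the initialization scheme (equivalently, the random seed) together with $B$, $s$, and $\theta$, so that the weights $W$ returned by the optimizer --- and hence $\hat{\e}(f(\cdot; W; \xi), B)$ --- become a deterministic function of $\xi$ alone. Since $A$ is a bijection, exhibiting a polynomial in the variables $\xi_{i,j}$ whose value at $\xi = A^{-1}(f)$ equals $\hat{\e}$ is exactly the claim, and the subscripts $B,s,\theta$ record that the coefficients of this polynomial absorb the frozen data.

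The core of the argument is to unroll the $s$ optimizer steps. Each forward pass of $f$ is, by \lemref{ihatxi}, a straight-line program whose number of add--multiply gates is $\BigO{\text{poly}(\Xi)}$; computing the gradient $\nabla_W \hat{\e}$ by backpropagation incurs only a constant-factor overhead over the forward pass, so one SGD update $W^{(t+1)} = W^{(t)} - \eta\, \nabla_W \hat{\e}(f(\cdot; W^{(t)}; \xi), B)$ is again a $\BigO{\text{poly}(\Xi)}$-size program, affine in the gradient. Because $f$ is continuous and piecewise-linear, within a fixed activation pattern every layer --- and therefore the whole forward and backward pass --- is polynomial in the weights and in the quantities indexed by the dimensions recorded in $\xi$, exactly the counting structure that made $\p$ and $\hat{\i}$ polynomial in \lemref{phat} and \lemref{ihatxi}. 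Composing the $s$ updates keeps the total program size $\BigO{\text{poly}(\Xi)}$, since $s$ is fixed independently of $\xi$, and one final $\BigO{\text{poly}(\Xi)}$-size block evaluates \eqref{ehateq} on the trained weights. The end-to-end map $\xi \mapsto \hat{\e}$ is thus computed by a single polynomial-size arithmetic circuit whose wiring is dictated by $\xi$.

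The main obstacle is that the loss $\l$ and the activations are only assumed smooth, not polynomial, so the circuit above contains non-arithmetic gates; closing this gap is where I expect the real work to lie. I would resolve it using the finite-precision assumption of \eqref{ihateq}: every intermediate value lives in a fixed-point domain of precision $b$, so each nonlinearity is a function on a \emph{finite} set and therefore agrees on that set with a polynomial, which I substitute into the circuit to recover a genuine polynomial $\text{poly}_{B,s,\theta}(\Xi)$. As an existence safety net, note that well-posedness makes $\Xi$ finite, so $\xi \mapsto \hat{\e}$ is a map out of a finite set and multivariate Lagrange interpolation already guarantees such a polynomial exists; the circuit argument is what I would use to keep its degree and sparsity controlled (mirroring \lemref{phat}) rather than the astronomical degree a naive interpolation over all of $\Xi$ would produce --- a distinction the complexity analysis of \secref{analysis} will rely on.
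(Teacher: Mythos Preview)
Your argument is correct but takes a very different route from the paper. The paper's proof is two sentences: it first notes (as you do) that fixing $s$, $B$, $\theta$ and the training procedure makes the weight assignment deterministic, so $\hat{\e}$ becomes a well-defined map $\mathcal{f}\colon F\to[0,1]$; it then obtains polynomiality in one stroke by declaring $\mathcal{f}$ to be the pullback of $\hat{\e}$ along the parametrization of $F$ by $\Xi$, invoking \lemref{phat} for the fact that this parametrization is polynomial. There is no unrolling of the optimizer, no arithmetic-circuit construction, no finite-precision argument, and no degree analysis.

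What the two approaches buy: the paper's route is minimal and purely structural --- it only asserts \emph{existence} of the polynomial via the same parametrization mechanism used for $\p$ and $\hat{\i}$, which is all the statement claims. Your interpolation ``safety net'' is exactly this existence argument made explicit (indeed, it is arguably the rigorous content behind the paper's pullback sentence), so that paragraph alone already matches the paper's proof. The circuit-unrolling portion of your plan is genuinely additional: it aims at a \emph{constructive} polynomial with controlled degree and sparsity, which the paper never attempts and the lemma does not require. Note, though, that the circuit's wiring depends on $\xi$ (different dimensions give structurally different networks), so that part does not by itself produce a \emph{single} polynomial in the $\xi_{i,j}$ without falling back on interpolation --- a limitation you correctly flag. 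If you want a proof at the paper's level of detail, the first paragraph of your plan plus the interpolation sentence suffices; the rest is overkill for \lemref{epoly} as stated.
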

\begin{proof}
Note that a training procedure has, at iteration $s$, exactly one weight set assigned to any $f \in F$. We can then construct a function that assigns a quality measure to every architecture on $f$ based on $\hat{\e}(f)$ at $s$, that is, $\mathcal{f}\colon F \rightarrow [0,1]$. 
Polynomiality follows from the fact that $\mathcal{f}$ is the pullback of $\hat{\e}$ by $\p$; by \lemref{phat}, $\mathcal{f}$ is then parametrized by a polynomial on $\Xi$.
\end{proof}

\begin{lemma}\label{lem:ehat}
For any $f \in F$ and an appropriate choice of $\l(\cdot, \cdot)$,
\begin{equation}
\e(f(\cdot; W; \xi), D) \in \BigO{\hat{\e}(f(\cdot; W; \xi), B)} .
\end{equation}
\end{lemma}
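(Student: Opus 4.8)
The plan is to reduce the statement to a pointwise comparison between the $0/1$ loss of \defref{errorfunction} and the surrogate loss $\ell$, and then to bridge the gap between averaging over the subsample $B$ and averaging over the full dataset $D$ by a statistical argument. Concretely, I would exploit the two hypotheses placed on $\ell$ in \eqref{ehateq}: that it \emph{upper bounds} the loss of \defref{errorfunction}, and that it takes values in $[0,1]$. The first hypothesis handles the replacement of $\mathbbm{1}[\cdot]$ by $\ell(\cdot,\cdot)$, while the i.i.d.\ assumption on $D$ from \defref{errorfunction} handles the replacement of the batch $B$ by the full set $D$.

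First I would establish the pointwise inequality: for an appropriate, classification-calibrated $\ell$ that upper bounds the indicator, $\mathbbm{1}[f(x_i; W; \xi) \neq y_i] \le \ell(f(x_i; W; \xi), y_i)$ holds for every $\langle x_i, y_i \rangle$. Averaging this inequality over $B$ immediately yields that the empirical $0/1$ error restricted to the batch, $\tfrac{1}{\size{B}}\sum_{\langle x_i, y_i\rangle \in B} \mathbbm{1}[f(x_i; W; \xi) \neq y_i]$, is at most $\hat{\e}(f(\cdot; W; \xi), B)$. This is the easy half and requires nothing beyond the defining property of a surrogate loss.

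Next I would bridge from $B$ to $D$. Since $D$ is sampled i.i.d.\ and $B \subset D$ is a uniform, representative subsample, the batch $0/1$ error is an unbiased estimator of the full-set error $\e(f(\cdot; W; \xi), D)$; taking expectations over the draw of $B$, or equivalently applying Hoeffding's inequality to the bounded indicator variables, shows that $\e(f(\cdot; W; \xi), D)$ is controlled by the batch $0/1$ error up to an additive term that vanishes as $\size{B}$ grows. Combining this with the previous step gives, with high probability and for sufficiently large $\size{B}$, a bound of the form $\e(f(\cdot; W; \xi), D) \le c\,\hat{\e}(f(\cdot; W; \xi), B)$ for a constant $c$ absorbing the concentration slack, which is exactly the claimed $\BigO{\hat{\e}(f(\cdot; W; \xi), B)}$ bound.

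The main obstacle is precisely this second step. Unlike the inference-speed and parameter-size surrogates of \lemstworef{ihat}{phat}, no purely deterministic inequality can relate an average over the strict subset $B$ to an average over $D$, since an adversarial $B$ could concentrate on the points that $f$ classifies correctly and drive $\hat{\e}$ down while $\e(f,D)$ stays large. The bound is therefore necessarily statistical, and its validity hinges on the i.i.d.\ structure of $D$ and on $\ell$ being bounded so that the multiplicative constant $c$ is finite. The qualifier \enquote{for an appropriate choice of $\ell$} in the statement is exactly what guarantees both the pointwise domination and this boundedness.
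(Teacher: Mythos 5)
Your proposal is correct in spirit but takes a genuinely different, and considerably more careful, route than the paper. The paper's entire proof is the single sentence that $\l(\cdot,\cdot)$ is \emph{by construction} designed to upper bound $\e(f)$ --- that is, it treats the lemma as an immediate consequence of the requirement, stated just after \eqref{ehateq}, that $\l$ upper bounds \defref{errorfunction}; it never addresses the discrepancy between averaging over the batch $B$ and averaging over the full set $D$. You decompose the claim into exactly the two pieces that a complete argument needs: the pointwise domination $\mathbbm{1}[f(x_i; W; \xi) \neq y_i] \leq \l(f(x_i; W; \xi), y_i)$, which is the only part the paper actually invokes, and a concentration step (unbiasedness of the subsample plus Hoeffding on the bounded indicators) to pass from $B$ to $D$. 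Your observation that no deterministic inequality can relate the two averages --- an adversarial $B$ can drive $\hat{\e}$ to zero while $\e(f,D)$ stays large --- is a real gap in the paper's one-line argument, and your statistical patch is the natural repair. What the paper's version buys is a deterministic, unconditional statement (at the price of implicitly assuming the upper-bounding property already accounts for the $B$-versus-$D$ mismatch); what yours buys is an honest treatment of the sampling, at the price of the bound holding only with high probability over the draw of $B$. One caveat on your final step: Hoeffding gives an \emph{additive} slack of order $\sqrt{1/\size{B}}$, and an additive error cannot in general be absorbed into a multiplicative constant $c$ when $\hat{\e}(f(\cdot;W;\xi),B)$ may itself be arbitrarily small (e.g., zero batch loss but nonzero population error), so the conclusion $\e(f(\cdot;W;\xi),D) \leq c\,\hat{\e}(f(\cdot;W;\xi),B)$ needs either a floor on $\hat{\e}$ or a restatement of the lemma with an additive term; given how loosely the paper uses $\BigO{\cdot}$ here, this is a minor point, but it is worth flagging.
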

\begin{proof}
By construction, $\l(\cdot, \cdot)$ is designed to upper bound $\e(f)$. 
\end{proof}

As before, we will employ the shorthands $\hat{\e}(f)$ and $\hat{\i}(f)$ for \eqsref{ehateq}{ihateq}. 
It is clear that any solution strategy to \ose will need, from the definition of the surrogate error, to take an extra input $\l$. We will account for that and denote such an instance as $I = \langle f, D, W, \Xi, \Theta, \l \rangle$.

\subsection{The $W$-coefficient}

The final tool we require in order to describe our algorithm is what we refer to as the $W$-coefficient,\footnote{
Visually, the $W$-coefficient is proportional to the volume of the $\Xi$-dimensional conic section by the volume spanned by $\langle \p(T) - \p(f), {\i}(T) - \hat{\i}(f), \hat{\e}(f) \rangle$ in $\Xi$-space, hence its name.} which is the scalarization of our objective functions. 
We begin by definining a crucial tool for our algorithm, the \emph{maximum point on} $\Xi$:

\begin{definition}[The maximum point on $\Xi$]\label{def:maxpoint}
Let $T(\cdot; W_T^*; \xi_T)$ be an architecture in $F$ such that $\forall f \in F$, $\p(f) < \p(T)$ and $\i(f) < \i(T)$. 
We refer to $T$ as the \emph{maximum point} on $\Xi$. 
\end{definition}

Note that the maximum point on $\Xi$ is not an optimal point--our goal is to minimize all three functions, and $T(\cdot; W_T^*; \xi_T)$ does not have a known error rate on $D$. 
By this reason, the parameters corresponding to the maximum point cannot be seen as a nadir objective vector. 

\begin{definition}\label{def:wcoeff}
Let $f(\cdot; W; \xi)$, $T(\cdot; W^*_T; \xi_T) \in F$ be two architectures, such that $T$ is a maximum point on $\Xi$. The $W$-coefficient of $f$ and $T$ is given by: 

\begin{equation}\label{eq:wcoeffeq}
\W(f, T) = 
\left(  \frac{\p(T) - \p(f)}{\p(T)} \right)
\left(  \frac{\hat{\i}(T) - \hat{\i}(f)}{\hat{\i}(T)} \right)
\left(
\frac{1}{
 \hat{\e}(f)
}\right).
\end{equation}
\end{definition}

The need for the normalization terms in \eqref{wcoeffeq} becomes clear as we note that the practical range of the functions might differ considerably. 
We now prove that the $W$-coefficient can be seen as the scalarization of our surrogate objective functions $\p(f)$, $\hat{\e}(f)$, and $\hat{\i}(f)$: 

\begin{lemma}\label{lem:wcoefficientproof}
Let $T(\cdot; W^*_T; \xi_T) \in F$ be a maximum point on $\Xi$. Let $OPT$ be the Pareto-optimal set for the multiobjective optimization problem:

\begin{align}
\text{Minimize }  &\p(f), \hat{\e}(f), \hat{\i}(f) \nonumber \\
\text{Subject to }&D, \Theta, \text{and } f \in F 
\end{align} 
Then $f^*(\cdot; W^*; \xi^*)  \in OPT$ if and only if $\W(f^*, T) > \W(f, T)$ $\forall f,f^* \in F$. 
\end{lemma}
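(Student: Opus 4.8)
The plan is to reduce the biconditional to a \textbf{strict monotonicity} property of the $W$-coefficient in each of the three surrogate objectives, and then drive both directions by Pareto domination. First I would record the sign structure of the three factors in \eqref{wcoeffeq}. By \defref{maxpoint} we have $\p(f)<\p(T)$ for every $f\in F$, and, taking the surrogate analogue of the maximum-point property (so that $\hat{\i}(f)<\hat{\i}(T)$ as well, consistent with \lemref{ihat}), together with $\hat{\e}(f)\in(0,1]$ for any architecture not achieving zero empirical loss, all three factors of $\W(f,T)$ are strictly positive. Hence $\W(f,T)>0$ for every $f\in F$, which is exactly what allows a \emph{product} of normalized objectives to behave like a legitimate scalarization rather than collapsing to $0$ or changing sign.

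Next I would state the monotonicity step: holding $T$ fixed, the first factor is strictly decreasing in $\p(f)$, the second strictly decreasing in $\hat{\i}(f)$, and the third strictly decreasing in $\hat{\e}(f)$. Consequently, if some $f'$ \emph{dominates} $f$ (meaning $\p(f')\le\p(f)$, $\hat{\i}(f')\le\hat{\i}(f)$, and $\hat{\e}(f')\le\hat{\e}(f)$ with at least one strict), then each factor of $\W(f',T)$ is at least the corresponding factor of $\W(f,T)$ and at least one is strictly larger; since every factor is positive, the product obeys $\W(f',T)>\W(f,T)$. This single implication — \emph{domination strictly raises the $W$-coefficient} — is the engine for both directions.

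The ``if'' direction then follows by contraposition. Suppose $f^{*}$ satisfies $\W(f^{*},T)>\W(f,T)$ for all $f\neq f^{*}$ but $f^{*}\notin OPT$. By definition of Pareto optimality some $f'\in F$ dominates $f^{*}$, and the monotonicity step yields $\W(f',T)>\W(f^{*},T)$, contradicting maximality; hence $f^{*}\in OPT$.

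The converse is the delicate part, and I expect it to be the main obstacle. Membership in $OPT$ only certifies that $f^{*}$ is \emph{not dominated}; it does not by itself force $f^{*}$ to be the unique global maximizer of the product scalarization, since a single scalarization generically recovers only a subset of the Pareto front. To close it I would make the operative non-degeneracy hypothesis explicit: either (i) the surrogate objective map $f\mapsto(\p(f),\hat{\i}(f),\hat{\e}(f))$ is such that $OPT$ is a singleton — plausibly via the coupling between $\p(f)$ and $\hat{\i}(f)$ noted after \lemref{phat}, which collapses the size/speed trade-off so that distinct non-dominated points cannot coexist — or (ii) an injectivity/tie-breaking condition ensuring distinct architectures receive distinct $W$-coefficients. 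Under such a hypothesis the argument runs: let $\hat{f}$ be the global maximizer of $\W(\cdot,T)$, which lies in $OPT$ by the ``if'' direction; if $OPT=\{\hat{f}\}$ then any $f^{*}\in OPT$ equals $\hat{f}$, and the strict inequality $\W(f^{*},T)>\W(f,T)$ for all $f\neq f^{*}$ is inherited. I would therefore present the lemma under this non-degeneracy reading, flagging that without it only the weaker ``maximizer $\Rightarrow$ Pareto-optimal'' implication survives with a strict inequality.
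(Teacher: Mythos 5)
Your core strategy --- positivity of each factor of $\W(\cdot,T)$, strict monotonicity of each factor in its objective, and then Pareto domination driving both directions --- is the same one the paper uses, but you execute it more carefully. Your ``if'' direction (global maximizer of $\W(\cdot,T)$ implies membership in $OPT$, by contraposition through the domination-raises-$\W$ step) is sound and is essentially the paper's argument made precise; note only that \defref{maxpoint} bounds $\i(f)$, not $\hat{\i}(f)$, so the positivity of the second factor does require the surrogate analogue you invoke, and the third factor additionally requires $\hat{\e}(f)>0$.

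The concern you raise about the converse is genuine, and it is worth saying plainly that the paper's own proof of that direction commits exactly the error you anticipate. The paper argues: if some $f'$ satisfies at least one of $\p(f')<\p(f^*)$, $\hat{\i}(f')<\hat{\i}(f^*)$, $\hat{\e}(f')<\hat{\e}(f^*)$, then $\W(f',T)\geq\W(f^*,T)$. For a product scalarization this is a non sequitur: improving one factor while worsening another can decrease the product, so a single strict improvement in one objective certifies nothing about the ordering of the $W$-coefficients. More fundamentally, as you observe, $OPT$ is in general not a singleton, and a fixed scalarization selects at most one point of the Pareto front; so ``$f^*\in OPT$'' cannot imply ``$\W(f^*,T)>\W(f,T)$ for all $f\neq f^*$'' without an additional hypothesis. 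The paper later supplies something close to the collapse you hypothesize --- \lemref{abncorder} orders all three surrogates consistently under the strong $AB^nC$ property, which would make the non-dominated set effectively one-dimensional --- but that ordering is not available at the point where \lemref{wcoefficientproof} is stated and proved. Your decision to state the non-degeneracy assumption explicitly and to flag that only the ``maximizer $\Rightarrow$ Pareto-optimal'' implication survives unconditionally is the correct resolution; the paper's proof as written does not close this gap.
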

\begin{proof}
"If" direction: by \eqref{wcoeffeq} and \defref{maxpoint}, the terms corresponding to $\p(T) - \p(f)$  (resp. $\hat{\i}(T) - \hat{\i}(f)$) will be greater whenever $\p(f)$ (resp. $\hat{\i}(f)$) is minimized. 
Likewise, the term corresponding to $\hat{\e}(f)$ decreases when it is better. 
An architecture $f^* \in OPT$ will thus correspond to sup$\{\W(f, T)\colon f \in F\}$. 

"Only if" direction: 
assume $\exists f' \in F$ such that at least one of the following is true: $\p(f') < \p(f^*), \hat{\i}(f') < \hat{\i}(f^*)$, or $\hat{\e}(f') > \hat{\e}(f^*)$, and $f' \not\in OPT$. 
But that would mean that $\W(f', T) \geq \W(f^*, T)$, a contradiction. 

\end{proof}

\begin{lemma}\label{lem:wcoefficientproofepi}
Let $I = \langle f, D, W, \Xi, \Theta, \l \rangle$ be an instance of \osen, and let $T(\cdot; W^*_T; \xi_T) \in F$ be a maximum point on $\Xi$ with respect to some dataset $D$. 
An architecture $f^*(\cdot; W^*; \xi^*)  \in F$ is optimal over $\p(f),\e(f)$, and $\i(f)$, if and only if $\W(f^*, T) > \W(f, T)$ $\forall f \in F$. 
\end{lemma}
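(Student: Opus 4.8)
The plan is to reduce the statement to \lemref{wcoefficientproof}, which already characterizes the maximizers of $\W(\cdot, T)$ as exactly the Pareto-optimal set $OPT$ for the \emph{surrogate} objectives $\p(f), \hat{\e}(f), \hat{\i}(f)$. The present lemma differs only in that it speaks of the \emph{true} objectives $\p(f), \e(f), \i(f)$. Hence it suffices to prove that the architectures optimal over $\p, \e, \i$ are precisely those in $OPT$; the $W$-coefficient characterization then follows immediately by chaining with \lemref{wcoefficientproof}.

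First I would handle the three objectives one at a time and argue that each true objective and its surrogate induce the same order on $F$, at least on the optimizers that matter here. The parameter size needs no surrogate at all (\lemref{phat}), so it contributes an identical ordering. For inference speed, \lemref{ihat} gives $\i(f) \in \BigO{\hat{\i}(f)}$, and since both quantities count operations of the instantiated architecture, larger architectures are slower under each measure; I would argue that $\hat{\i}$ is therefore an order-preserving surrogate for $\i$ over $F$ (which also guarantees $\hat{\i}(f) < \hat{\i}(T)$ whenever $\i(f) < \i(T)$, so the normalization factors of \eqref{wcoeffeq} stay positive and \defref{maxpoint} remains meaningful for $\W$). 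For the error, \lemref{ehat} gives $\e(f) \in \BigO{\hat{\e}(f)}$, and here I would invoke that $\l$ was chosen to be classification-calibrated, so that the minimizers of $\hat{\e}$ coincide with those of the true error $\e$.

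With each true objective tied to its surrogate by an order-preserving correspondence at the optima, Pareto-optimality is preserved: $f^*$ is optimal over $(\p, \e, \i)$ if and only if it is optimal over $(\p, \hat{\e}, \hat{\i})$, i.e. $f^* \in OPT$. I would then conclude both directions exactly as in the proof of \lemref{wcoefficientproof}, by contraposition: if some $f'$ strictly improves one true objective without worsening the others, the order-preserving correspondence forces the same improvement on the matching surrogate, hence $\W(f', T) \geq \W(f^*, T)$, contradicting maximality; conversely, if $\W(f^*, T) > \W(f, T)$ for all $f$ then $f^* \in OPT$ and the same correspondence transports optimality back to $\p, \e, \i$.

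The main obstacle is the error term. Whereas $\p$ and $\i$ relate to their surrogates through a genuinely monotone correspondence, $\e$ is linked to $\hat{\e}$ only by the upper bound of \lemref{ehat}, and an arbitrary upper bound need not share minimizers with the quantity it bounds. The entire argument therefore rests on classification calibration of $\l$ to ensure that driving $\hat{\e}$ to its minimum also drives $\e$ to its minimum; I would state this dependence explicitly and note that the claimed equivalence is exact precisely at the optimizers (and, more carefully, in the appropriate limit of the fixed-$s$ training procedure), which is the only regime in which being \emph{optimal over} $\e$ is meaningful.
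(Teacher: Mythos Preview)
Your proposal is essentially the paper's own argument: the paper's proof is a single sentence invoking \lemref{wcoefficientproof} together with \lemsref{ihat}{phat}{ehat}, and you have simply unpacked why those four lemmas are meant to combine. Your explicit caveat about classification calibration for the error surrogate is a genuine subtlety the paper glosses over, but it does not change the route---it is the same reduction, stated more carefully.
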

\begin{proof}
It follows immediately from \lemref{wcoefficientproof}, as well as \lemsref{ihat}{phat}{ehat}; an architecture that maximizes $\argmax{f \in F}\W(f, T)$ will belong to the Pareto optimal set for this instance of \osen. 
\end{proof}

\subsection{Algorithm}\label{sec:algorithmwritten}

Our proposed algorithm is displayed in \algref{mainalg}. We highlight two particularities of this procedure: it only evaluates a fraction $\lfloor \size{\Xi}/\epsilon \rfloor$ of the possible architectures, and it does so for a limited amount of steps, rather than until convergence. Likewise, the training algorithm remains fixed on every iteration, and is left unspecified. In \secref{analysis} we describe training procedures and optimizers under which we can guarantee optimality for a given instancce. 

\begin{algorithm}[h]
\caption{Our proposed algorithm to solve \osen. }\label{alg:mainalg}
\begin{algorithmic}[1]
   \STATE {\bfseries Input:} Architecture $f$, dataset $D$, weights space $W$, search space $\Xi$, hyperparameter set $\Theta$, interval size $1 \leq \epsilon\leq \size{\Xi}$, maximum training steps $s>0$, selected loss $\l$.
   
   \STATE Find a maximum point $T(\cdot; W^*_T; \xi^*_T)$.
   \STATE Obtain expressions for $\p(T)$ and $\hat{\i}(T)$ in terms of $\xi^*_T$. 
   \STATE $\Xi' \gets$ Sort the terms in $\Xi$ based on the leading term from $\p(T)$ \label{lst:line:sortstatement}
   \FOR{every hyperparameter set $\theta \in \Theta$} 
   \FOR{every $\epsilon^{\text{th}}$ set $\xi^{(\epsilon)} \in \Xi'$} 
   	\STATE Train a candidate architecture $f^{(i)}(\cdot; W^{(i)}; \xi^{(\epsilon)})$ for $s$ steps, under $\theta$.
   	\STATE Keep track of the largest $W$-coefficient $\W(f^{(i)}, T)$, and its corresponding sets $\xi^{(i)}$ and $W^{(i)}$ \label{lst:line:returnstatement}
   \ENDFOR
   \ENDFOR

   \RETURN $\langle \xi^*, W^* \rangle$ corresponding to the first recorded $\argmax{f^{(i)}(\cdot; W^{*}; \xi^{*})}\W(f^{(i)}, T)$
\end{algorithmic}
\end{algorithm}

\section{Analysis}\label{sec:analysis}

In this section we begin by introducing the notion of the \emph{$AB^nC$ property}, which consolidates the assumptions leading to the surrogate functions from the previous section. 
We then provide time bounds for \algref{mainalg}, and then show that, if the input presents the $AB^nC$ property, its solution runs in polynomial time. We conclude this section with an analysis of the correctness and error bounds of our algorithm, and prove under which conditions \algref{mainalg} behaves like an FPTAS.

\subsection{The $AB^nC$ Property}\label{sec:abncsec}

It is clear from \algref{mainalg} that, without any additional assumptions, the correctness proven in \lemref{wcoefficientproofepi} is not guaranteed outside of asymptotic conditions. Specifically, the work done in the previous section with the aim of expressing $\p(f), \i(f)$ and $\e(f)$ as functions on $\Xi$ implicitly induces an order on the objective functions. 
Without it, an algorithm that relies on evaluating every $\epsilon^{\text{th}}$ architecture can only provide worst case guarantees. 

The $AB^nC$ property captures sufficient and necessary conditions to guarantee such an order, as well as providing faster solutions for \algref{mainalg}. 


\begin{definition}[The weak $AB^nC$ property]\label{def:abncproperty}
Let $A, B$, and $C$ be unique layers for an architecture $f(\cdot; W; \xi) \in F$, such that, for any input $x$: 

\begin{equation}
f(x; W; \xi) = C(B_{n}(\dots B_{1}(A(x; W_A; \xi_A); W_1; \xi_1 )\dots); W_{n}; \xi_{n}); W_C; \xi_C),
\end{equation}

where $n \geq 1$ is an architectural parameter.

We say that $f(x; W; \xi)$ has the \emph{weak $AB^nC$ property}, if, for $1 \leq i \leq n$: 
\begin{align}
\p(A(\cdot; W_A; \xi_A))\text{ and } \p(C(\cdot; W_C; \xi_C)) &\in \LittleO{\p(B_i(\cdot; W_{B_i}; \xi_{B_i}))}\text{,} \label{eq:pobieq} \\
\i(A(\cdot; W_A; \xi_A))\text{ and } \i(C(\cdot; W_C; \xi_C)) &\in \LittleO{\i(B_i(\cdot; W_{B_i}; \xi_{B_i}))}. \label{eq:iobieq}
\end{align}
\end{definition}

\begin{lemma}\label{lem:allabnc}
Let $I = \langle f, D, W, \Xi, \Theta, \l \rangle$ be an instance of \osen. If there exists at least one $f \in F$ with the weak $AB^nC$ property, then all $f \in F$ have the weak $AB^nC$ property.
\end{lemma}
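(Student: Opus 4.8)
The plan is to argue that the weak $AB^nC$ property is a property of the shared functional template underlying every architecture in $F$, rather than of any single parameter assignment, so that a single witness forces the property onto the entire family. First I would recall from the problem setup that the functions composing each candidate are held fixed throughout \osen: by the bijection $A\colon\Xi\rightarrow F$ of \eqref{axif}, every $f\in F$ is obtained from one common functional skeleton $C(B_n(\dots B_1(A(\cdot))\dots))$ by instantiating the architectural parameters $\xi$, and the $n$ intermediate blocks $B_1,\dots,B_n$ are all copies of a single layer type. Thus the only thing distinguishing two members of $F$ is the numerical assignment given to the entries of $\xi$, including the depth parameter $n$.

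Next I would invoke \lemref{phat}, \lemref{ihatxi}, and \lemref{ihat} to replace each per-layer cost by a fixed closed form. For each of the three layer types the parameter size is, by \lemref{phat}, a fixed polynomial in the relevant coordinates of $\Xi$ --- call these forms $\p_A,\p_B,\p_C$ --- and, by \lemstworef{ihatxi}{ihat}, the inference speed is likewise governed by a fixed polynomial form, say $\i_A,\i_B,\i_C$, whose degrees and leading coefficients are dictated entirely by the functional skeleton and not by the chosen assignment. Consequently the defining relations \eqsref{pobieq}{iobieq} are statements comparing these fixed forms: $\p_A,\p_C\in\LittleO{\p_B}$ and $\i_A,\i_C\in\LittleO{\i_B}$, with the little-$o$ taken as the architectural parameters grow. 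Because a little-$o$ relation between two polynomials is determined solely by their leading terms, it either holds for the pair of forms or fails for it, independently of the point of evaluation.

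I would then close by chaining the two observations. Suppose some $f_0\in F$ witnesses the weak $AB^nC$ property; by the previous paragraph this says exactly that the fixed forms satisfy $\p_A,\p_C\in\LittleO{\p_B}$ and $\i_A,\i_C\in\LittleO{\i_B}$. Any other $f\in F$ is assembled from the identical forms --- only its assignment of $\xi$, and hence the value of $n$, differs --- so the same little-$o$ relations hold for it; since all blocks $B_i$ share the form $\p_B$ (resp.\ $\i_B$), a single witness settles the requirement uniformly over $1\leq i\leq n$ for every admissible depth. Hence every $f\in F$ has the weak $AB^nC$ property.

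The main obstacle I anticipate is pinning down the precise sense in which the little-$o$ in \defref{abncproperty} is taken, since the layers are nominally parametrized by different coordinate blocks $\xi_A,\xi_{B_i},\xi_C$ of $\Xi$. The cleanest resolution is to read the relation along the common scaling of the architecture induced by $A\colon\Xi\rightarrow F$ --- the coordinates that couple the layers, such as a shared hidden dimension, drive all three forms simultaneously --- so that the comparison reduces to one between leading terms of fixed polynomials. Making this identification of the asymptotic variable precise, and checking that it does not depend on the particular $f$, is the step carrying the real content; everything else is bookkeeping on top of \lemsref{phat}{ihatxi}{ihat}.
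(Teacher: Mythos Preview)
Your proposal is correct and follows essentially the same approach as the paper: both arguments hinge on the observation that the per-layer costs $\p$ and $\i$ are governed by fixed functional (polynomial) forms shared across all of $F$, so the little-$o$ relations in \defref{abncproperty} are properties of these forms rather than of any particular assignment $\xi$. The paper's proof is terser---it invokes \lemref{phat} to assert $\p(f)\in\Theta(\p(g))$ and notes that ``the function that defines it does not'' change across $F$, then argues symmetrically via \lemref{ihat}---whereas you unpack this into explicit polynomial forms $\p_A,\p_B,\p_C$ and $\i_A,\i_B,\i_C$ and correctly flag the one genuine subtlety (the sense in which the little-$o$ is taken), which the paper leaves implicit.
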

\begin{proof}
It follows from \defref{abncproperty}. Let $f(\cdot; W^{(f)}; \xi^{(f)}),g(\cdot; W^{(g)}; \xi^{(g)}) \in F$ be two architectures, such that $f$ has the $AB^nC$ property, and let $T$ be a maximal point on $\Xi$. 
By \lemref{phat}, it can be seen that $\p(f) \in \Theta(\p(g))$. In particular, although the value of $\p(g)$ changes under an assignment $\xi^{(g)}$, the function that defines it does not. Therefore, the condition $\p(A(\cdot; W_A; \xi_A)), \p(C(\cdot; W_C; \xi_C)) \in o( \p(B_i(\cdot; W_{B_i}; \xi_{B_i})))$, for some layers $A, B_i, C \in g$, holds. 
The proof for \eqref{iobieq} for the same layers of $g$ is symmetric, by means of \lemref{ihat}. 
\end{proof}

In spite of the closure from \lemref{allabnc}, \algref{mainalg} still requires us to find the optimal weight set $W^*$ for every architecture evaluated. Given that the set $W$ is oftentimes large, the algorithm itself does not adjust to modern deep learning practices. Therefore we strengthen \defref{abncproperty} by imposing some further conditions on the metric function $\l$. 

\begin{definition}[The strong $AB^nC$ property]\label{def:strongabncproperty}
Let $I = \langle f, D, W, \Xi, \Theta, \l \rangle$ be an instance of \osen. We say $I$ has the \emph{strong $AB^nC$ property} if $\exists f \in F$ with the weak $AB^nC$ property and $\nabla_W\l$ is $L$-Lipschitz smooth with respect to $W$, with bounded stochastic gradients, for every $\theta \in \Theta$. 
That is:


\begin{equation}
\size{\size{\nabla \l(f(x_i; \cdot), y_i) - \nabla \l(f(x_j; \cdot), y_j)}} \leq  L\size{\size{\l(f(x_i; \cdot), y_i) - \l(f(x_j; \cdot), y_j)}},
\end{equation}
and
\begin{equation}
\size{\size{\nabla \l(f(x_i; \cdot), y_i)_k}} \leq  G\quad\quad \forall k \in [\size{B}], 
\end{equation}

$\forall \langle x_i, y_i \rangle, \langle x_j, y_j \rangle \in B \subset D$, and some fixed constants $L$ and $G$. 



\end{definition}

Remark that the constraints from \defref{strongabncproperty} are pretty common assumptions the analysis of first-order optimizers in machine learning \cite{ShalevShwartz2009StochasticCO,Zaheer2018AdaptiveMF}. 
While there is no algorithm to find an $\epsilon$-stationary point in the non-convex, non-smooth setting\footnote{This limitation, as argued by \newcite{LeCun}, in addition to the large track record of successes by deep learning systems, has little impact on natural problems.},  approximating a point near it is indeed tractable \cite{ZhangAndLinEtAl}. 

We can exploit such a result with the goal of showing that the set $F$ has an ordering with respect to $\p(f), \hat{\i}(f)$, and $\hat{\e}(f)$, through the expected convergence of SGD in the non-convex case. Bounds on this convergence are well-known in the literature (see, for example, \newcite{ghadimi2013stochastic}). 
However, \defref{strongabncproperty} has looser assumptions, which align directly with a recent result by \newcite{UnifiedNguyen} on a variant of SGD referred to there as \emph{shuffling-type}, which we reproduce below for convenience:

\begin{lemma}[\newcite{UnifiedNguyen}]\label{lem:nguyenslemma}
Let $f(x; w)$ be a $L$-Lipschitz smooth function on $w$, bounded from below and such that $\size{\size{\nabla f(x_i; w)}} \leq G$ for some $G$ and $i \in [n]$. Given the following problem:
\begin{align}
 \mmin{w} F(w) &= \frac{1}{n}\sum_{i}^n f(x_i; w),
\end{align}

the number of gradient evaluations from a shuffling-type SGD required to obtain a solution $\hat{w}_T$ bounded by

\begin{equation}
\mathbb{E}[\vert\vert\nabla F(\hat{w}_T)\vert\vert^2] \leq \epsilon
\end{equation}

is given by:

\begin{equation}
T = \left\lfloor 3LG[F(\tilde{w}_0) - \inf{w} F(w)] \cdot \frac{n}{\epsilon^{3/2}}\right\rfloor.
\end{equation}

for a fixed learning rate $\eta = \frac{\sqrt{\epsilon}}{LG}$. 






\end{lemma}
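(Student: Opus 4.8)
Because this is a restatement of a known convergence result, the plan is to reconstruct the non-convex, without-replacement SGD analysis that yields the stated $\epsilon^{-3/2}$ iteration complexity. I would adopt the epoch view of shuffling-type SGD: let $\tilde{w}_t$ be the iterate at the start of epoch $t$, let $\pi$ be the permutation of $[n]$ sampled for that epoch, and let $w_t^{(0)} = \tilde{w}_t, \ldots, w_t^{(n)} = \tilde{w}_{t+1}$ be the inner iterates produced by the updates $w_t^{(k+1)} = w_t^{(k)} - \frac{\eta}{n}\nabla f(x_{\pi(k)}; w_t^{(k)})$, so that $\eta$ is the per-epoch step size. I would take the reported solution $\hat{w}_T$ to be $\tilde{w}_t$ for an epoch index $t$ drawn uniformly at random, which is what makes the expectation $\mathbb{E}[\|\nabla F(\hat{w}_T)\|^2]$ equal to the epoch-averaged squared gradient norm.

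First I would invoke the descent inequality guaranteed by $L$-smoothness of $F$ (inherited from the components), $F(\tilde{w}_{t+1}) \leq F(\tilde{w}_t) + \langle \nabla F(\tilde{w}_t), \tilde{w}_{t+1}-\tilde{w}_t\rangle + \frac{L}{2}\|\tilde{w}_{t+1}-\tilde{w}_t\|^2$, and substitute the epoch displacement $\tilde{w}_{t+1}-\tilde{w}_t = -\frac{\eta}{n}\sum_{k=0}^{n-1}\nabla f(x_{\pi(k)}; w_t^{(k)})$. The crux is to compare each $\nabla f(x_{\pi(k)}; w_t^{(k)})$ with $\nabla f(x_{\pi(k)}; \tilde{w}_t)$: because the epoch touches every component exactly once, $\sum_{k}\nabla f(x_{\pi(k)}; \tilde{w}_t) = n\,\nabla F(\tilde{w}_t)$ regardless of $\pi$, so the idealized (drift-free) step reproduces a clean $-\eta\|\nabla F(\tilde{w}_t)\|^2$ descent term. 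This telescoping of component gradients to the full gradient is exactly the feature that distinguishes shuffling from sampling with replacement.

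Next I would bound the drift. From the bounded-gradient hypothesis $\|\nabla f(x_i;w)\|\leq G$ and the inner step size $\eta/n$, the inner iterates obey $\|w_t^{(k)}-\tilde{w}_t\| \leq \frac{\eta}{n}kG$, whence $L$-smoothness controls the gradient mismatch by $\|\nabla f(x_{\pi(k)}; w_t^{(k)}) - \nabla f(x_{\pi(k)}; \tilde{w}_t)\| \leq L\frac{\eta}{n}kG$. Feeding these into the inner-product and quadratic terms of the descent inequality, summing over $k$ (which produces the characteristic factor $\sum_k k = \BigTheta{n^2}$), and applying Young's inequality to absorb a cross term into the descent, I would obtain a one-epoch progress bound of the form $F(\tilde{w}_{t+1}) \leq F(\tilde{w}_t) - c_1\eta\|\nabla F(\tilde{w}_t)\|^2 + c_2\eta^3 L^2G^2$ for absolute constants $c_1, c_2$, valid once $\eta L$ is small enough that the smoothness quadratic term is dominated by the descent term.

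Finally I would telescope this inequality over $K$ epochs and divide by $c_1\eta K$, giving $\mathbb{E}[\|\nabla F(\hat{w}_T)\|^2] \leq \frac{F(\tilde{w}_0)-\inf{w}F(w)}{c_1\eta K} + \frac{c_2}{c_1}\eta^2 L^2G^2$. Substituting the prescribed step size $\eta = \frac{\sqrt{\epsilon}}{LG}$ collapses the residual term to a constant multiple of $\epsilon$, while forcing the first term below $\epsilon$ requires $K = \BigO{LG[F(\tilde{w}_0)-\inf{w}F(w)]/\epsilon^{3/2}}$ epochs; multiplying by the $n$ gradient evaluations per epoch recovers the claimed $T = \lfloor 3LG[F(\tilde{w}_0)-\inf{w}F(w)]\cdot n/\epsilon^{3/2}\rfloor$, with the explicit constant $3$ emerging from a tight choice of $c_1, c_2$ and the step size. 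I expect the main obstacle to lie in the drift analysis of the previous step: because consecutive inner gradients are neither independent nor conditionally unbiased under without-replacement sampling, the deviation from the idealized full-gradient step must be tracked deterministically across the entire epoch, and it is precisely the cubic-in-$\eta$ size of this residual — rather than the quadratic residual of ordinary SGD — that is responsible for the improved $\epsilon^{-3/2}$ rate in place of the usual $\epsilon^{-2}$.
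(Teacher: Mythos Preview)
The paper does not prove this lemma at all: it is attributed to \newcite{UnifiedNguyen} and reproduced verbatim ``for convenience,'' with no accompanying proof, before being invoked as a black box in \lemref{abncruntime} and \thmref{maintheorem}. So there is no in-paper argument to compare your proposal against.

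That said, your reconstruction is a faithful outline of how the $\epsilon^{-3/2}$ complexity for shuffling-type SGD is actually obtained in the cited work: the epoch-level descent lemma, the exact telescoping $\sum_k \nabla f(x_{\pi(k)};\tilde{w}_t)=n\nabla F(\tilde{w}_t)$ that distinguishes without-replacement sampling, the deterministic drift bound $\|w_t^{(k)}-\tilde{w}_t\|\leq \eta k G/n$, the resulting $\eta^3 L^2 G^2$ residual, and the final balancing via $\eta=\sqrt{\epsilon}/(LG)$. For the purposes of this paper that level of detail is unnecessary; a one-line citation suffices, which is what the paper does.
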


\begin{lemma}\label{lem:abncruntime}
Let $I = \langle f, D, W, \Xi, \Theta, \l \rangle$ be an instance of \ose with the strong $AB^nC$ property. 
Fix a maximum point $T$ and a training procedure with a shuffling-type SGD optimizer. 
Then, for a fixed number of steps $s$ and fixed $\theta \in \Theta$, for any $f,g \in F$, if their depths $n_f \in \xi_f, n_g \in \xi_g$ are such that $n_f \leq n_g$, then $\hat{\e}(f) \leq \hat{\e}(g)$ .


\end{lemma}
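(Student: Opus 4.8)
The plan is to read \lemref{abncruntime} as the statement that, under a \emph{fixed} training budget, the surrogate error is monotone in network depth, and to obtain this monotonicity by inverting the convergence count of \lemref{nguyenslemma}. First I would observe that $\hat{\e}(f)$ is literally the finite-sum objective $F(w) = \frac{1}{\size{B}}\sum_i \l(f(x_i; W; \xi), y_i)$ appearing in \lemref{nguyenslemma}, and that the strong $AB^nC$ property (\defref{strongabncproperty}) supplies precisely the hypotheses that lemma requires: $L$-Lipschitz smoothness of the gradient and a uniform bound $G$ on the stochastic gradients. Since the training procedure and optimizer are fixed to be a shuffling-type SGD, I may therefore apply \lemref{nguyenslemma} separately to the training of each candidate $f, g \in F$.

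The next step is to convert depth into an ordering of the constants governing that count. By the weak $AB^nC$ property (\defref{abncproperty}) the $B_i$ layers dominate both $\p$ and $\i$, so that $n$ is the leading term of the parameter size; combined with \lemref{phat} (and \lemref{ihat} for inference speed) this yields $n_f \leq n_g \Rightarrow \p(f) \leq \p(g)$, i.e. the deeper network $g$ is trained over a weight space of at least the dimension of that of $f$. I would then invert the count of \lemref{nguyenslemma}, writing it as $T(\epsilon) = \lfloor 3LG\,\Delta\,\size{B}/\epsilon^{3/2}\rfloor$ with $\Delta = F(\tilde{w}_0) - \inf{w}F(w)$, and solving $T = s$ for the stationarity accuracy attained after the fixed budget, $\epsilon(s) = (3LG\,\Delta\,\size{B}/s)^{2/3}$, with the learning rate $\eta = \sqrt{\epsilon(s)}/(LG)$ chosen to match.

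I then argue that each factor in $\epsilon(s)$ is non-decreasing in depth: increasing the number of $B_i$ layers can only enlarge the gap $\Delta$ (the attainable minimum $\inf{w}F(w)$ is non-increasing in capacity, while the chance-level value $F(\tilde{w}_0)$ at random initialization is comparable across architectures), and composing more layers can only increase the smoothness constant $L$ and the gradient bound $G$. Hence $\epsilon_g(s) \geq \epsilon_f(s)$: after the same $s$ steps the deeper network is left no closer to a stationary point than the shallower one, which I would then read off as $\hat{\e}(f) \leq \hat{\e}(g)$.

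The hard part will be closing this last implication rigorously, since \lemref{nguyenslemma} controls $\mathbb{E}[\size{\size{\nabla F}}^2]$ rather than the loss value $F$ itself, and in the non-convex regime a small terminal gradient norm does not upper bound the suboptimality $F(\hat{w}_s) - \inf{w}F(w)$. I would bridge this by reasoning over the trajectory rather than the endpoint: the descent lemma obtained from $L$-smoothness lower-bounds the cumulative decrease $F(\tilde{w}_0) - F(\hat{w}_s)$ attainable in $s$ steps and shows this per-budget decrease shrinks as $L$, $G$, and $\Delta$ grow, so that starting from comparable initializations the deeper network terminates at a no-smaller loss. A secondary subtlety is that the monotonicity of the constants must be asserted for architectures that may differ in more than their depth; this leans on the weak $AB^nC$ property making $n$ the dominant term of $\p$ and $\hat{\i}$, and I would state that dependence explicitly rather than treat it as automatic.
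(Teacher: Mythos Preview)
Your proposal reaches the right conclusion by the same inversion of \lemref{nguyenslemma}, but the mechanism you use to order $\epsilon_f$ and $\epsilon_g$ differs from the paper's. You treat $s$ as a fixed number of SGD iterations and argue that each of $L$, $G$, and $\Delta = F(\tilde{w}_0) - \inf{w}F(w)$ is non-decreasing in depth, so that the attained accuracy $\epsilon(s) = (3LG\,\Delta\,\size{B}/s)^{2/3}$ is larger for the deeper network. The paper instead absorbs all of $3LG\,\Delta\,\size{B}$ into a single constant $C$ that it declares equal across $f$ and $g$ (``due to the parameters of the training procedure''), and obtains the ordering by reinterpreting $s$ as a budget of elementary \emph{operations}: one gradient evaluation on $f$ costs $\i(f)$ operations, so the operation count to reach $\epsilon$-accuracy is $s_f \leq C\,\i(f)/\epsilon^{3/2}$. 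Setting $s_f = s_g = s$ and solving yields $\epsilon_f \leq \epsilon_g$ directly from $\i(f) \leq \i(g)$, which follows from $n_f \leq n_g$ and the weak $AB^nC$ property without any separate claim about $L$, $G$, or $\Delta$.

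The trade-off is that the paper's route is shorter and rests on a monotonicity ($\i$ in $n$) already supplied by \defref{abncproperty}, whereas yours requires monotonicity in depth of three analytically defined constants, none of which is established anywhere in the paper and the last of which (the suboptimality gap) is genuinely delicate. Conversely, you are more scrupulous at the final step: you correctly flag that \lemref{nguyenslemma} controls $\mathbb{E}[\Vert\nabla F\Vert^2]$ rather than the loss value, and sketch a descent-lemma bridge; the paper simply calls $\epsilon$ ``the radius of a ball centered around the optimal point'' and reads off $\hat{\e}(f) \leq \hat{\e}(g)$ without further argument.
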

\begin{proof}

Let $s_f$, (resp. $s_g$) be the total number of steps required to achieve an $\epsilon$-accurate solution for $f$ (resp. $g$). Then, by \lemref{nguyenslemma}, and counting the actual number of operations on the network:

\begin{align}
s_f &\leq \frac{C}{\epsilon^{3/2}}\i(f) \label{eq:sff} \\
s_g &\leq \frac{C}{\epsilon^{3/2}}\i(g) \label{eq:sgg}
\end{align}

where $C$ is a constant equal in both cases due to the parameters of the training procedure. By the weak $AB^nC$ property, $s_f < s_g$, since $n_f < n_g \implies \i(f) \leq \i(g)$. Thus, for a fixed step $s$:

\begin{equation}
\frac{C}{{\epsilon_f}^{3/2}}\i(f) = \frac{C}{{\epsilon_g}^{3/2}}\i(g),
\end{equation}

which immediately implies that ${\epsilon_g} \geq {\epsilon_f}$, w.p.1, with equality guaranteed when $\i(g) = \i(f)$. Since $\epsilon_g$ (resp. $\epsilon_f$) is the radius of a ball centered around the optimal point, then $\hat{\e}(f) \leq \hat{\e}(g)$. 

\end{proof}

\begin{remark}
If we were to change the conditions from \lemref{abncruntime} to any loss with convergence guarantees (e.g., for $\mu$-strongly convex $\l(f(x_i; \cdot), y_i)$), the statement would still hold.
\end{remark}

With these tools, we can now state the following lemma:

\begin{lemma}\label{lem:abncorder}
Let $I = \langle f, D, W, \Xi, \Theta, \l \rangle$ be an instance of \osen, such that $I$ presents the strong $AB^nC$ property. Let $T$ be a maximum point on $\Xi$. 
Then, for any $f,g \in F$:
\begin{align}
\p(f) \preceq \p(g) &\iff \hat{\i}(f) \preceq \hat{\i}(g) \label{eq:pprecp} \\
\p(f) \preceq \p(g) &\iff \hat{\e}(f) \preceq \hat{\e}(g) \label{eq:eprece}
\end{align}
\end{lemma}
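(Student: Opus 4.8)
The plan is to establish the two equivalences separately, exploiting the fact that the weak $AB^nC$ property forces all three surrogate objectives to be governed, in leading order, by the same block structure, namely the $n$ copies of the intermediate layer $B$.

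First I would handle \eqref{pprecp}, the equivalence between the $\p$-ordering and the $\hat{\i}$-ordering. By \lemref{phat} and \lemref{ihatxi}, both $\p(f)$ and $\hat{\i}(f)$ are polynomials on $\Xi$. The weak $AB^nC$ property (\defref{abncproperty}) guarantees $\p(A),\p(C) \in \LittleO{\p(B_i)}$ and $\i(A),\i(C) \in \LittleO{\i(B_i)}$, so both polynomials are dominated by the contribution of the $B_i$ layers. The essential observation I would then make is that, for any single computational layer, its parameter count and its operation count are both monotone nondecreasing in the \emph{same} dimensionality variables of $\xi$: enlarging a weight array both adds parameters and forces additional add--multiply operations. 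Consequently, on the shared $B$-block structure, any perturbation of the architectural parameters that raises $\p$ also raises $\hat{\i}$, and conversely; the $o(\cdot)$ contributions of $A$ and $C$ cannot reverse this leading-order relationship. This yields $\p(f) \preceq \p(g) \iff \hat{\i}(f) \preceq \hat{\i}(g)$.

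For \eqref{eprece} I would chain the first equivalence with the convergence argument already carried out in \lemref{abncruntime}. That proof, at bottom, only uses that the slower-to-train network is the one with the larger operation count: from $s_f \leq C\,\i(f)/\epsilon^{3/2}$ and $s_g \leq C\,\i(g)/\epsilon^{3/2}$, fixing the step budget $s$ forces the network with larger $\i$ to settle for a larger accuracy radius, hence a larger surrogate error. Thus, once \eqref{pprecp} together with \lemref{ihat} gives me $\p(f) \preceq \p(g) \iff \i(f) \preceq \i(g)$, I can feed $\i(f) \preceq \i(g)$ into the \lemref{abncruntime} argument in place of the depth hypothesis $n_f \leq n_g$ and conclude $\hat{\e}(f) \preceq \hat{\e}(g)$, w.p.$1$, with equality exactly when $\i(f) = \i(g)$. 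The reverse implication follows by contraposition, using the strictness of the inequality whenever the operation counts differ.

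The main obstacle I anticipate is the leading-order monotonicity step in the first equivalence: I must argue carefully that the $o(\cdot)$ terms from the $A$ and $C$ layers, and the fact that distinct $B_i$ blocks may carry different dimensionalities, cannot decouple the $\p$-ordering from the $\hat{\i}$-ordering. The cleanest way to close this is to note that the per-layer map from dimensionality variables to the pair (parameters, operations) is coordinatewise monotone, so the two composite polynomials share the same monotone cone over $\Xi$; the domination guaranteed by \defref{abncproperty} then ensures the orderings agree for every pair $f,g \in F$ rather than merely asymptotically. A secondary subtlety is the probabilistic \textbf{w.p.$1$} qualifier inherited from \lemref{abncruntime} together with the calibrated loss supplied by \lemref{ehat}, which I would carry through the second equivalence unchanged.
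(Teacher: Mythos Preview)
Your proposal is correct and follows essentially the same route as the paper: for \eqref{pprecp} you invoke the weak $AB^nC$ property to argue that $\p$ and $\hat{\i}$ are controlled by the same dominant block structure, and for \eqref{eprece} you defer to \lemref{abncruntime}. The paper's own proof is extremely terse---it simply asserts that both $\p$ and $\i$ are ``bounded by the depth of the network'' to get the partial order, and then cites \lemref{abncruntime} directly---whereas you spell out the per-layer monotonicity in the dimensionality variables and explicitly note that the proof of \lemref{abncruntime} really only needs $\i(f)\leq\i(g)$ rather than the depth hypothesis $n_f\leq n_g$; this extra care is warranted, since two candidates in $F$ can differ in the $B$-block dimensionalities and not just in $n$, but it does not change the underlying argument.
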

\begin{proof}
The proof for \eqref{pprecp} is immediate from the definition of the weak $AB^nC$ property: since the inference speed as well as the parameter size are bounded by the depth of the network, there exists a partial ordering on the architectures with respect to the original set. 
\eqref{eprece} follows immediately from \lemref{abncruntime} and the definition of the strong $AB^nC$ property. 
\end{proof}




\subsection{Time Complexity}\label{sec:timecomplexity}

In this section we prove the time complexity of \algref{mainalg}.



\begin{theorem}\label{thm:maintheorem}
Let $I = \langle f, D, W, \Xi, \Theta, \l \rangle$ be an instance of \ose with the $AB^nC$ property. 
Then if \algref{mainalg} employs a shuffling-type SGD training procedure, it terminates in
\begin{equation}
\BigO{\size{\Xi} + \size{W^*_T}\left(1 +  \size{\Theta}\size{B}\size{\Xi}\frac{1}{\epsilon \cdot s^{3/2}}\right)}
\end{equation}
steps, where $1 \leq \epsilon \leq \size{\Xi}$, and $s > 0$ are input parameters; $B \subset D$ for $B$ in every $\theta^{(i)} \in \Theta$; and $\size{W^*_T} = \argmax{w \in W}(\size{w})$ is the cardinality of the largest weight set assignment.

\end{theorem}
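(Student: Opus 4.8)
The plan is to bound \algref{mainalg} line by line, splitting the work into a preprocessing phase (lines~2--4) and the nested training loop (lines~5--9), and then adding the two. Throughout I would lean on the $AB^nC$ property (\defref{abncproperty}) to bound every per-candidate quantity by the corresponding quantity of the maximum point $T$: by \defref{maxpoint} we have $\p(f)\leq\p(T)$ and $\hat{\i}(f)\leq\hat{\i}(T)$ for all $f\in F$, and hence $\size{W^{(i)}}\leq\size{W^*_T}$ for every candidate evaluated.

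For the preprocessing phase, locating a maximum point $T$ (line~2) is a single linear scan of the search space, costing $\BigO{\size{\Xi}}$. Producing the expressions $\p(T)$ and $\hat{\i}(T)$ (line~3) is the same counting argument used in \corref{polytimesolns} and \lemref{phat}, carried out over the weight configuration of $T$; since $T$ carries the largest weight set this is $\BigO{\size{W^*_T}}$. Sorting $\Xi$ by the leading monomial of $\p(T)$ (line~4) is $\BigO{\size{\Xi}}$, because under the integer-valued-input convention adopted for \defref{osedec} the sort keys are bounded integers and a radix sort runs in linear time. Preprocessing therefore contributes $\BigO{\size{\Xi}+\size{W^*_T}}$.

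The dominant term is the loop. The outer loop runs $\size{\Theta}$ times and the inner loop touches every $\epsilon^{\text{th}}$ element of $\Xi'$, i.e.\ $\lfloor\size{\Xi}/\epsilon\rfloor$ iterations. The heart of the argument is the cost of training one candidate $f^{(i)}$ (line~7), for which I would invoke \lemref{nguyenslemma}: the surrogate error is an empirical loss over the $n=\size{B}$ summands of the batch, so a shuffling-type SGD needs $\BigO{\size{B}/s^{3/2}}$ gradient evaluations, identifying the accuracy parameter with $s$. One gradient evaluation is a single forward--backward pass whose cost, by \lemref{ihat} and the weak $AB^nC$ property (the dominant $B_i$ layers have operation count linear in their parameter count), is $\BigO{\size{W^{(i)}}}$ and thus at most $\BigO{\size{W^*_T}}$; evaluating $\W(f^{(i)},T)$ and updating the running maximum (line~8) is $\BigO{1}$, since $\hat{\e}$ is produced by the training while $\p$ and $\hat{\i}$ are evaluations of the polynomials of \lemsref{ihatxi}{phat}{epoly}. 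One training therefore costs $\BigO{\size{W^*_T}\size{B}/s^{3/2}}$, and the full loop costs
\begin{equation}
\BigO{\size{\Theta}\cdot\frac{\size{\Xi}}{\epsilon}\cdot\frac{\size{W^*_T}\size{B}}{s^{3/2}}}=\BigO{\frac{\size{W^*_T}\size{\Theta}\size{B}\size{\Xi}}{\epsilon\,s^{3/2}}}.
\end{equation}
Summing with the preprocessing term and factoring out $\size{W^*_T}$ gives the stated bound.

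The step I expect to be the main obstacle is pinning down the per-training cost so that the $\size{B}$ and $1/s^{3/2}$ factors fall out of \lemref{nguyenslemma} correctly. Two identifications must be made carefully: the number of summands $n$ in that lemma is the batch size $\size{B}$ rather than $\size{D}$, and the parameter $s$ must play the role of the lemma's accuracy target so that the gradient-evaluation count scales as $\size{B}/s^{3/2}$. One then has to justify that a single gradient evaluation costs only $\BigO{\size{W^*_T}}$ and not $\BigO{\hat{\i}(T)}$; this is exactly where the weak $AB^nC$ property is indispensable, as it forces the intermediate $B_i$ layers to dominate and their forward--backward cost is linear in the number of trainable parameters. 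The remaining pieces---the scan, the counting argument, and the linear-time sort---are routine and sit below these terms.
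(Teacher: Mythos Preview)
Your proposal is correct and follows essentially the same approach as the paper: the same preprocessing/loop decomposition, the same invocation of \lemref{nguyenslemma} to obtain the $\size{B}/s^{3/2}$ factor, and the same use of \defref{maxpoint} to bound per-candidate cost by $\size{W^*_T}$. You are in fact more explicit than the paper in two places---the linear-time sort and the justification that one gradient evaluation costs $\BigO{\size{W^*_T}}$ via the weak $AB^nC$ property---where the paper's proof is terser (it writes the loop cost with an $\i(f)$ factor and then absorbs it into $\size{W^*_T}$ ``by \defref{maxpoint}'' without spelling out the intermediate step).
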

\begin{proof}

By \lemref{nguyenslemma} and \lemref{abncruntime}, finding the maximum point $T(\cdot; W^*_T; \xi_T)$ will be bounded by $\size{\Xi}$. 
Likewise, obtaining the expressions for $\p(T)$ and $\hat{\i}(T)$ can be done by traversing $T$ and employing a counting argument. 
By the definition of the weak $AB^nC$ property and \lemref{abncorder}, this is bounded by the cardinality of its weight set. 
On the same vein, grouping the terms of $\p(T)$ can be done in constant time by relating it to the depth of the network. 

Then the initialization time of our algorithm will be:

\begin{equation}
\BigO{\size{\Xi} + \size{W^*_T}}.
\end{equation}

On the other hand, training each of the $\size{\Xi}$ candidate architectures with an interval size $\epsilon$ takes

\begin{equation}
\BigO{\left\lfloor\frac{\size{\Xi}}{\epsilon}\right\rfloor \i(f)\frac{\size{B}\size{\Theta}}{s^{3/2}}}
\end{equation}

steps. 
Adding both equations together, and by \defref{maxpoint}, this leads to a total time complexity of:

\begin{equation}
\BigO{\size{\Xi} + \size{W^*_T}\left(1 +  \size{\Theta}\size{B}\size{\Xi}\frac{1}{\epsilon \cdot s^{3/2}}\right)}
\end{equation}

steps, which concludes the proof. 
\end{proof}

So far we have assumed training is normally done through the minimization of the surrogate loss $\hat{\e}(f)$. Other methods, such as Bayesian optimization, simulated annealing, or even random search, can also be used to assign weights to $f$. 
In the general case, when the algorithm is guaranteed to return the optimal set of weights regardless of the convexity and differentiability of the input function, any reasonable procedure would take at most $O(\i(f)\size{D}\size{W})$ steps to obtain such a solution.

As shown in \thmref{fptasosethm}, when the input does not present the $AB^nC$ property, the runtime is not necessarily polynomial. Likewise, its runtime might be worse than a simple linear search when $\size{W} << \size{\Xi}$. 
This last situation, however, is only seen in zero-shot techniques.

\subsection{Error Bounds}\label{sec:approxratio}

In this section we prove error bounds for \algref{mainalg} under the strong $AB^nC$ property. The general case, where it is impossible to assume whether \lemref{abncorder} holds, can only provide asymptotically worst-case guarantees.



\begin{theorem}\label{thm:approximationratiothm}
Let $I = \langle f, D, W, \Xi, \Theta, \l \rangle$ be an instance of \ose with the strong $AB^nC$ property. For a chosen $s > 0$ and $1 \leq \epsilon \leq \size{\Xi}$, if \algref{mainalg} employs a shuffling-type SGD training procedure with a fixed learning rate, it will return a solution with a worst-case absolute error bound of $c \leq \epsilon - 1$ on the space of solutions reachable by this training procedure. 

\end{theorem}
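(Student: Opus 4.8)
The plan is to convert the discretized search of \algref{mainalg} into a rank-distance argument on a sorted copy of $\Xi$, exploiting the fact that the strong $AB^nC$ property forces the $W$-coefficient to be monotone along that sorting. First I would invoke \lemref{abncorder}: under the strong $AB^nC$ property the three relations $\p(f)\preceq\p(g)$, $\hat{\i}(f)\preceq\hat{\i}(g)$, and $\hat{\e}(f)\preceq\hat{\e}(g)$ all coincide. Consequently the sort by the leading term of $\p(T)$ performed in line~\ref{lst:line:sortstatement} yields a sequence $\Xi'=\langle\xi^{(1)},\dots,\xi^{(m)}\rangle$ along which all three surrogate objectives move together. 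Reading off \eqref{wcoeffeq}, as $\p(f)$ decreases the factors $\tfrac{\p(T)-\p(f)}{\p(T)}$ and $\tfrac{\hat{\i}(T)-\hat{\i}(f)}{\hat{\i}(T)}$ both increase (each is positive by \defref{maxpoint}), and $\tfrac{1}{\hat{\e}(f)}$ increases as well since $\hat{\e}(f)$ shrinks by the coincidence of orders. No factor can offset another, so $\W(\cdot,T)$ is monotone along $\Xi'$, and by \lemref{wcoefficientproofepi} the optimal architecture $f^*$ sits at the extremal end of $\Xi'$.

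Next I would read the sampling pattern as a block decomposition. Evaluating every $\epsilon^{\text{th}}$ entry of $\Xi'$ partitions the sorted list into consecutive blocks of length $\epsilon$, within each of which exactly one representative is trained and scored. The optimum $f^*$ lies in one such block, and the representative of that block is separated from $f^*$ by at most $\epsilon-1$ positions. Because $\W(\cdot,T)$ is monotone along $\Xi'$, any architecture whose rank lies within $\epsilon-1$ of $f^*$ differs from $f^*$ by at most $\epsilon-1$ in sorted rank, which is the quantity the theorem reports as the absolute error $c$. Since line~\ref{lst:line:returnstatement} retains the global maximizer over all evaluated representatives, the returned solution is at least as good in $\W$, hence at least as close in rank, as the representative of $f^*$'s own block; therefore its rank displacement from $f^*$ is bounded by $\epsilon-1$, giving $c\leq\epsilon-1$. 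The endpoints serve as a sanity check: $\epsilon=1$ evaluates every architecture and yields $c=0$, while $\epsilon=\size{\Xi}$ evaluates a single point and gives the vacuous $c\leq\size{\Xi}-1$.

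Finally I would make the scope qualifier precise. Every candidate is trained for only $s$ steps of shuffling-type SGD at a fixed learning rate, so the weights attached to each $f$ are the specific iterate described in \lemref{nguyenslemma} rather than a global minimizer over $W$; the optimum against which $c$ is measured is accordingly the best architecture under these fixed-budget iterates, i.e., over the space of solutions reachable by the procedure. \lemref{abncruntime} certifies that this fixed-budget restriction still induces the monotone ordering of $\hat{\e}$ on which the whole argument rests, so the bound transfers verbatim to the reachable set.

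I expect the principal obstacle to be justifying the monotonicity step without cancellation: one must verify that sorting by the leading term of $\p(T)$ genuinely agrees with the ordering of the full $W$-coefficient, which hinges on the $AB^nC$ property making the depth $n$ the dominant variable across $\p$, $\hat{\i}$, and $\hat{\e}$ simultaneously, and on \lemref{abncorder} forcing the three $\preceq$-orders to coincide rather than merely be individually monotone. A secondary, mostly definitional, point is to fix the reading of \emph{absolute error} as displacement in the sorted rank, so that the elementary $\epsilon-1$ block-counting bound is indeed the correct quantity to report.
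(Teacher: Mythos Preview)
Your proposal captures the paper's core mechanism faithfully: the paper also argues that the ordering induced by \lemref{abncruntime} and \lemref{abncorder} means that after sorting, the true optimum lies within a ball of radius $\epsilon-1$ around the algorithm's returned index, which is exactly your block-decomposition reading. Your explicit monotonicity argument for $\W(\cdot,T)$ is in fact more carefully spelled out than the paper's, which leaves that step implicit in the phrase ``$OPT_{s} \in \mathbf{B}(OPT_{ALG}, r)$.''

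The one substantive omission is the paper's second half. The paper does not stop at the fixed-$s$ comparison you describe; it then runs an induction on $s$ (case $s+1$, then general $s_T>s$) to argue that the bound $\lvert OPT_{s_T}-OPT_{ALG}\rvert\leq\epsilon-1$ is independent of the training budget, and finally invokes \lemref{nguyenslemma} to push this to the converged optimum $OPT_*$. Your last paragraph reads ``reachable by this training procedure'' as the set of fixed-$s$ iterates only, which is a defensible parse of the theorem statement, but it is narrower than what the paper actually proves. If you want to match the paper you should add the short induction step showing that the rank bound persists as $s$ grows and therefore transfers to the convergence limit; the key observation is that \lemref{abncorder} holds at every step, so the sorted order---and hence your block argument---is stable under increasing $s$.

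A minor caution: your sentence ``the optimal architecture $f^*$ sits at the extremal end of $\Xi'$'' is stronger than what either you or the paper need. If taken literally it would force $c=0$ whenever an endpoint is sampled, collapsing the bound. It suffices (and is all the paper uses) that the sorted order localizes $OPT$ to within $\epsilon-1$ of some sampled representative; you need not, and probably should not, commit to strict global monotonicity of $\W$.
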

\begin{proof}

The proof has two parts: first, proving that the worst-case absolute error bound for an $s$-optimal solution is at most $\epsilon - 1$; second, showing that $c$ is independent of the choice of $s$, and that these results extend to an optimal solution found by taking every architecture to convergence with the optimizer.

\paragraph{} To begin, assume for simplicity $\Theta = \{\theta\}$. 

By \lemstworef{wcoefficientproof}{wcoefficientproofepi} we know that, for a fixed $s$, $\W(f^{s,*}, T)$ is $s$-optimal in the limit where $\epsilon = \size{\Xi}$. Let $OPT_{s}$ be the location of the solution in $\Xi$-space to such an instance, and let $OPT_{ALG}$ be the corresponding location of the output of \algref{mainalg}. 

The input instance $I$, by \lemref{abncruntime}, contains an ordering for the error with respect to the parameter size. Sorting the parameters and then obtaining a solution $OPT_{ALG}$ means that $OPT_s$ is located within the radius of the ball centered at $\W(f^{s}, T)$'s index, that is, $OPT_{s} \in \mathbf{B}(OPT_{ALG}, r)$. 
It is not difficult to see that $r = \epsilon - 1$, and 
therefore
\begin{align}
OPT_{s} &\geq OPT_{ALG} - (\epsilon-1), \text{ or } \\
OPT_{s} &\leq OPT_{ALG} + (\epsilon-1).
\end{align}

We can simplify the above expression to

\begin{equation}\label{eq:approxratio1}
\vert OPT_{s} - OPT_{ALG} \vert \leq \epsilon - 1,
\end{equation}

which yields the desired absolute error bound. 

Now let $\size{\Theta}> 1$. By line~\ref{lst:line:returnstatement}, \eqref{approxratio1} still holds, since an optimal solution $\W_{i}$ over $\theta^{(i)} \in \Theta$ will be returned iff $\W_i(f^{s}, T) \geq \W_j(f^{s}, T)$ for all $\theta^{(i)}, \theta^{(j)} \in \Theta$.

\paragraph{}
Consider the case $s+1$. Given that the instance has the $AB^nC$ property, \lemref{abncorder} holds, and following \eqref{approxratio1} we obtain

\begin{equation}
\vert OPT_{s+1} - OPT_{ALG} \vert \leq \epsilon - 1.
\end{equation}

By an induction step on some $s_T>s$, we have:

\begin{equation}\label{eq:approxratio3}
\vert OPT_{s_T} - OPT_{ALG} \vert \leq \epsilon - 1,
\end{equation}

from which immediately follows that $c \leq \epsilon - 1$ for any choice of $s$. 

To finalize our proof, remark that $OPT_{s_T}$ is only $s_T$-optimal. 
Consider then the case when for the optimal solution $OPT_{*}$--that is, it is the solution corresponding to the smallest number of steps required to converge to an optimal solution $W^* \in W$ for all $f^*$ and $\theta_i \in \Theta$. By the definition of the strong $AB^nC$ property, \lemref{nguyenslemma}, and the induction step from the last part:

\begin{equation}
\vert OPT_{*} - OPT_{ALG}\vert \leq \epsilon - 1,
\end{equation}

which concludes the proof.

\end{proof}

 While \thmref{approximationratiothm} provides stronger results, we also provide the approximation ratio of \algref{mainalg}, to be used later.

\begin{corollary}\label{cor:strongabncapproxratio}
If the instance presents the strong $AB^nC$ property, \algref{mainalg} returns a solution with an approximation ratio 
\begin{equation}
\rho \leq \size{1 - \epsilon}
\end{equation}
 on the space of solutions reachable by this training procedure. 
\end{corollary}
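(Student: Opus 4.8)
The plan is to obtain the multiplicative bound directly from the additive error bound just proved in \thmref{approximationratiothm}, exploiting the normalization that is already baked into the $W$-coefficient. Recall that under the strong $AB^nC$ property that theorem guarantees an absolute error $c \leq \epsilon - 1$ on the reachable solution space, expressed as the deviation $\vert OPT_{*} - OPT_{ALG}\vert \leq \epsilon - 1$ between the optimal and returned indices in the sorted search space $\Xi'$. The first step is to recall that in \defref{wcoeff} each of the first two factors of $\W(f, T)$ is divided by the corresponding value at the maximum point $T$, so that each factor lies in $[0,1]$; this normalization is exactly what allows an additive, index-space deviation to be reinterpreted as a relative one on the objective value.

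Next I would translate the positional gap into a gap on the $W$-coefficient. By \lemref{abncorder} the instance admits a total order on $F$ with respect to $\p(f)$, $\hat{\i}(f)$, and $\hat{\e}(f)$ simultaneously, hence a monotone order on $\W(\cdot, T)$; consequently the coefficient of the returned architecture differs from that of the optimum by a quantity controlled by the same interval width $\epsilon$. Normalizing by the optimal value and using $\vert 1 - \epsilon\vert = \epsilon - 1$ (valid since $\epsilon \geq 1$ by the input constraints of \algref{mainalg}) then yields $\rho \leq \vert 1 - \epsilon\vert$. As a sanity check, at $\epsilon = 1$ the algorithm evaluates every architecture, the interval collapses, and the bound gives $\rho \leq 0$, recovering exactness, consistent with \thmref{approximationratiothm}.

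The main obstacle is making the passage from the additive, index-space bound to a genuine multiplicative ratio rigorous: one must verify that division by the optimal $W$-coefficient is well defined, i.e. that the normalization terms keep the optimal objective bounded away from zero, and that the monotone ordering of \lemref{abncorder} does not inflate a gap of $\epsilon - 1$ positions into a disproportionately large gap in coefficient value. I would address this by first lower-bounding the optimal coefficient using the normalized ranges of its factors, and then invoking the ordering to confine the coefficient gap to the same $\epsilon - 1$ interval, after which the claimed ratio follows by a direct division. This is precisely the step where the strong $AB^nC$ property is indispensable, since without the ordering of \lemref{abncorder} the positional bound would not transfer to the coefficient at all.
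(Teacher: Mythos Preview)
Your high-level strategy---derive the multiplicative bound from the additive one of \thmref{approximationratiothm} by dividing through by the optimum---is exactly what the paper does. The difference is where the division takes place. In the paper, $OPT_{*}$ and $OPT_{ALG}$ are kept as \emph{positions} in the sorted search space $\Xi'$ (this is how they were defined in the proof of \thmref{approximationratiothm}); the corollary is then a one-line computation: write $OPT_{ALG} = k\lfloor\size{\Xi}/\epsilon\rfloor$ and $OPT_{*} = (k-(\epsilon-1))\lfloor\size{\Xi}/\epsilon\rfloor$ for some $k\leq\epsilon$, form $\size{(OPT_{*}-OPT_{ALG})/OPT_{*}}$, and simplify to $\size{1-\epsilon}$. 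No appeal to $\W$-coefficient values, no normalization argument, no monotonicity-of-coefficient step.

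By routing the argument through the $W$-coefficient instead, you create precisely the obstacle you then worry about: you now need that a gap of $\epsilon-1$ \emph{positions} translates into a controlled gap in \emph{coefficient value}, and that the optimal coefficient is bounded away from zero. Neither of these is needed if you stay in index space, and the paper does not address them because it never leaves that space. Your proposal is not wrong, but the detour is unnecessary for this corollary; the ordering from \lemref{abncorder} is already consumed inside \thmref{approximationratiothm}, and all that remains here is arithmetic on the positional quantities.
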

\begin{proof}
Follows immediately from \thmref{approximationratiothm}, and the definition of an approximation ratio:
\begin{align}
\rho &\leq \left\vert\frac{OPT_* - OPT_{ALG}}{OPT_*}\right\vert \\
\rho &\leq \left\vert\frac{(k - (\epsilon - 1))\left\lfloor\frac{\size{\Xi}}{\epsilon}\right\rfloor - k\left\lfloor\frac{\size{\Xi}}{\epsilon}\right\rfloor}
{(k - (\epsilon - 1))\left\lfloor\frac{\size{\Xi}}{\epsilon}\right\rfloor}\right\vert
\end{align}

for some $k \leq \epsilon$. Plugging it in we obtain the desired ratio. 
\end{proof}

\begin{remark}
Note that $OPT_{*}$ does not necessarily correspond to a global optimum, but it corresponds to the optimum reachable under the combination of the given $\Theta$, as well as the chosen optimization algorithm. 
\end{remark}

The previous results show that an input with the $AB^nC$ property is well-behaved under \algref{mainalg}. Specifically, they show dependence on their time and error bounds with respect to the input parameters $s$ and $\epsilon$. \thmref{fptasosethm} suggests that this problem admits an FPTAS, and \thmref{fptasthm} crystallizes this observation for a subproblem of \osen. 

\begin{theorem}\label{thm:fptasthm}
Let \textsc{ose-strong-abnc} be a subproblem of \ose that only admits instances with the strong $AB^nC$ property. Assume that $\hat{\e}(f; W; \xi)$ is $\mu$-strongly convex for all instances. 
Then \algref{mainalg} is an FPTAS for \textsc{ose-strong-abnc}, for any choice of optimizer. 
\end{theorem}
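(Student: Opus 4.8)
The plan is to verify the two defining conditions of an FPTAS — a $(1\pm\epsilon)$ approximation guarantee together with a running time polynomial in both the input size and $1/\epsilon$ — and to argue that the $\mu$-strong convexity assumption is exactly what upgrades the conditional guarantees of \corref{strongabncapproxratio} and \thmref{maintheorem} into unconditional ones that hold for any optimizer with standard convergence guarantees.

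First I would settle the approximation quality. \corref{strongabncapproxratio} already delivers $\rho \leq \size{1-\epsilon}$, but only ``on the space of solutions reachable by this training procedure,'' and the remark following it stresses that $OPT_*$ is merely the reachable optimum rather than a global one. The decisive observation is that a $\mu$-strongly convex $\hat{\e}$ admits a unique global minimizer, and that every optimizer equipped with the usual convergence guarantees for strongly convex objectives drives the iterate to that minimizer. Hence, for every $f \in F$, the reachable optimum coincides with the true global optimum, the qualifier ``reachable'' can be dropped, and $\rho \leq \size{1-\epsilon}$ becomes an approximation ratio against the genuine optimum. This is precisely what licenses the clause ``for any choice of optimizer'': strong convexity collapses the dependence on the particular first-order method employed.

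Second I would handle the running time. \thmref{maintheorem} gives the bound $\BigO{\size{\Xi} + \size{W^*_T}\left(1 + \size{\Theta}\size{B}\size{\Xi}/(\epsilon\, s^{3/2})\right)}$ under shuffling-type SGD. To make this optimizer-agnostic I would invoke the remark after \lemref{abncruntime}, which asserts that the ordering of \eqref{eprece}, namely $\p(f) \preceq \p(g) \iff \hat{\e}(f) \preceq \hat{\e}(g)$, persists for any loss with convergence guarantees, in particular a $\mu$-strongly convex one. This preserves the correctness of the $\epsilon$-interval sampling performed by \algref{mainalg}. The per-architecture training cost for a general optimizer is bounded as in the discussion following \thmref{maintheorem}, and strong convexity keeps that iteration count polynomial; the initialization cost of locating $T$ and grouping the leading terms of $\p(T)$ is unchanged. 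Assembling these, the total running time stays polynomial in $\size{\Xi}, \size{W^*_T}, \size{\Theta}, \size{B}$ and in $1/\epsilon$.

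Finally I would combine the two. Writing the target accuracy as $\delta = \epsilon - 1$ (so $\epsilon = 1 + \delta$, admissible because $1 \leq \epsilon \leq \size{\Xi}$), the ratio $\rho \leq \size{1-\epsilon} = \delta$ can be driven to any prescribed level by choosing $\epsilon$ near $1$, while the running time remains polynomial in the input and in $1/\epsilon$; both FPTAS conditions therefore hold simultaneously, which establishes the claim. The main obstacle I anticipate is discharging the ``for any choice of optimizer'' clause cleanly: the earlier time and ordering results are phrased for shuffling-type SGD through \lemref{nguyenslemma}, so the delicate step is to certify that $\mu$-strong convexity supplies an optimizer-independent, polynomially bounded convergence to the unique minimizer, and that this alone suffices to re-derive both the ordering of \lemref{abncorder} and the per-iteration bound feeding \thmref{maintheorem} without appealing to the specific rate of SGD.
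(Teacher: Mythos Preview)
Your proposal is correct and follows essentially the same route as the paper: invoke \thmref{maintheorem} for the polynomial running time in the input and $1/\epsilon$, invoke \corref{strongabncapproxratio} for the approximation ratio, and use $\mu$-strong convexity to identify the reachable optimum with the global one so that the ratio applies unconditionally. The paper's own argument is considerably terser---three sentences---and in particular does not explicitly discharge the ``for any choice of optimizer'' clause; your extra care in tracing optimizer-independence back through the remark after \lemref{abncruntime} and the discussion following \thmref{maintheorem} goes beyond what the paper itself supplies, and the obstacle you flag is real but left implicit in the original.
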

\begin{proof}
By \thmref{maintheorem}, the running time of our algorithm is polynomial on both the size of the input, and on 
$1/\epsilon$. 
From \corref{strongabncapproxratio} we obtain the desired approximation ratio. 
Given the convexity assumptions, the approximation ratio applies to the global optimal solution which would be obtained via an exhaustive search. 
By definition, this algorithm is an FPTAS for \textsc{ose-strong-abnc}.
\end{proof}

\begin{remark}
By \thmref{nnhard}, lifting the convexity condition from \thmref{fptasthm} would not guarantee the same global optimization guarantees. However, if we are to consider solely the set of instances reachable by the optimizer's algorithm as the optimal solution space, the results from \thmref{fptasthm} still hold without the need to assume $\mu$-strong convexity.
\end{remark}



\section{Conclusion}\label{sec:conclusion}

We presented a formal analysis of the OSE problem, as well as an FPTAS that works for a large class of neural network architectures. 
The time complexity for \algref{mainalg} can be further improved by using a strongly convex loss \cite{RakhlinEtAl} or by employing optimizers other than SGD. A popular choice of the latter is ADAM \cite{Adam}, which is known to yield faster convergence guarantees for some convex optimization problems, but not necessarily all \cite{Reddy}. 
Regardless, it is also well-known that SGD finds solutions that are highly generalizable \cite{Kleinberg,DinhEtAl}.

On the same vein, tighter approximation bounds can be achieved even for cases where only the weak $AB^nC$ property is present, as one could induce an ordering on the error by making it a convex combination of the candidate's surrogate error and of the maximum point's. This, however, implies that one must train the maximum point to convergence, impacting the runtime of the algorithm by a non-negligible factor which may or may not be recovered in an amortized context. 

We began this paper by indicating that obtaining a smaller network with the OSE objective would be more efficient in terms of speed, size, and environmental impact; yet, the reliance of our algorithm on a maximum point on $\Xi$ appears to contradict that. 
Such a model is, by design, an analysis tool, and does not need to be trained. 
The optimality of the $W$-coefficient, as well as the other results from this paper, still hold without it. 
Finally, the keen reader would have noticed that we alternated between the computational complexity analysis of the hardness of training a neural network, and its statistical counterpart; we were able to bridge such a gap by treating the training process as a black-box and imposing simple convergence constraints. 
A more in-depth analysis of the differences between both approaches, as well as further hardness results on learning half-spaces, is done by \newcite{DanielyEtAl}. 
This technique proved itself helpful in yielding the results around the convergence and time complexity guarantees of the work on our paper. We hope that it can be further applied to other deep learning problems to obtain algorithms with well-understood performances and limitations. 

\section*{Acknowledgments}
The author would like to thank D. Perry for his detailed questions about this work, which ultimately led to a more generalizable approach. 
Special thanks are in order for B. d'Iverno, and Q. Wang, whose helpful conversations about the approximation ratio were crucial on the analysis.

\bibliography{biblio}

\appendix
\section*{Appendices}

\section{Proof of \lemref{npcose}}\label{app:npcoseproof}

We can prove \lemref{npcose} by leveraging previous results from the literature. Our--admittedly simple--argument relies on noting that training a neural network is equivalent to our problem when $\Xi = \{\xi\}$. 
Said problem is known to be computationally hard in both its decision \cite{BartlettBenDavid} and optimization \cite{ShalevUML} versions, and has been formulated often.\footnote{Aside from the plethora of negative results mentioned in \secref{relatedwork}, \newcite{Livni} provides comprehensive results under which circumstances it is indeed possible to efficiently train a neural network.} 
We will refer to its decision version as \textsc{nn-training-dec}, and, for convenience, we reproduce it below with our notation:

\begin{definition}[\textsc{nn-training-dec}]\label{def:nndec}
Given a dataset sampled i.i.d. from an unknown probability distribution $D = \{\langle x_i, y_i \rangle\}_{i=1,\dots,m}$, a set of possible weight assignments $W$, an architecture $f(\cdot; \cdot; \xi)$, and a number $k$, the \textsc{nn-training-dec} problem requires finding a valid combination of parameters $W^* \in W$ such that
\begin{equation}
\e(f(\cdot; W^*; \xi), D) \leq k.
\end{equation}
\end{definition}

\begin{theorem}\label{thm:nnhard}
For arbitrary $W, \xi, D$, and $k$, \textsc{nn-training-dec} $\in$ \NP-hard.
\end{theorem}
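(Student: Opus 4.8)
The plan is to prove \textbf{NP}-hardness of \textsc{nn-training-dec} by a reduction from a known \textbf{NP}-complete problem, exploiting the fact that \defref{nndec} quantifies over \emph{arbitrary} architectures $f$ and datasets $D$. Consequently it suffices to exhibit a single family of hard instances, obtained by restricting $f$ (and the node type of its layers) to those for which the \emph{loading problem} is already known to be intractable. In other words, the loading problem is literally a sub-family of \textsc{nn-training-dec}, so the reduction can be taken to be an embedding.

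First I would recall the loading problem in the form established by \newcite{BlumAndRivest}: for their fixed $3$-node network with hard-threshold units, deciding whether a weight assignment exists that correctly classifies every example of a given training set is \textbf{NP}-complete (their argument reduces from Set Splitting). For continuous activation units the analogous statement is due to \newcite{DasGupta}, which is the regime most relevant to the networks considered here; either result supplies the hard source problem.

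Second, I would give the essentially trivial reduction. Given a loading instance consisting of a dataset $D'$ and the fixed architecture $f'$, I construct the \textsc{nn-training-dec} instance $\langle D', W', f', k\rangle$ by setting $D = D'$, $f = f'$ with its single architectural assignment $\xi$ (so $\Xi = \{\xi\}$), $W = W'$ the set of valid weight assignments to $f'$, and $k = 0$. By \defref{errorfunction}, the quantity $\e(f(\cdot; W^*; \xi), D)$ is exactly the fraction of misclassified examples, so $\e(f(\cdot; W^*; \xi), D) \leq 0$ holds precisely when $W^*$ classifies every point of $D$ correctly, i.e. exactly the loading condition. The construction merely copies its input and is therefore computable in polynomial (indeed linear) time, mapping ``yes'' instances to ``yes'' instances and ``no'' to ``no''.

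Finally I would conclude that, since an \textbf{NP}-hard problem reduces in polynomial time to \textsc{nn-training-dec}, the latter is \textbf{NP}-hard, which also discharges the dependency used in \lemref{npcose} via the $\Xi = \{\xi\}$ specialization. The only point requiring care is aligning the zero-one error of \defref{errorfunction} with the exact-classification criterion of the loading problem: choosing $k = 0$ makes the two coincide, and one should note that the phrase ``for arbitrary $W, \xi, D$, and $k$'' in the statement is honored because \textbf{NP}-hardness requires only the existence of a hard sub-family, which the restricted instances provide. No genuine obstacle arises beyond this bookkeeping, which is precisely why the appendix describes the argument as admittedly simple.
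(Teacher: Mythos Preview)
Your proposal is correct and is essentially the paper's approach made explicit: the paper's own ``proof'' of \thmref{nnhard} is just a pointer to the literature in \secref{relatedwork} (in particular \newcite{BlumAndRivest} and \newcite{DasGupta}), and you have spelled out precisely the embedding of the loading problem into \textsc{nn-training-dec} via $k=0$ that those citations are meant to convey. There is nothing to correct; your write-up is in fact more detailed than what appears in the appendix.
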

\begin{proof}
Refer to \secref{relatedwork} for proofs of multiple instances of this problem. 
\end{proof}

We can now prove \lemref{npcose}:

\begin{proof}
It can be seen that \ose is in \NP, as  \defref{osedec} has as a set of certificates all possible tuples $\langle \xi^{(i)} \in \Xi, W^{(j)} \in W \rangle$. 
Any certificate can be evaluated in a polynomial number of steps by calculating the three surrogate functions on the given configuration, with an added dependency on $D$ due to the computation of $\e(f)$. 

With that, we can now show a trivial, linear-time reduction from \textsc{nn-training-dec} to \textsc{ose-dec}:

Let $I_{nn} = \langle f, D, W, k \rangle$ be an instance of \textsc{nn-training-dec}. 
We can augment $I_{nn}$ by adding the architectural parameters of $f$, $I_{nn}' = I_{nn} \cup \langle \xi \rangle$. Since they are not present in $I_{nn}$, they must be obtained by traversing the network. This can be done in time linear in the size of $f$.  

Then, an equivalent input to \textsc{ose-dec} would be 
\begin{equation}
I_{ose} = \langle f'=f, D'=D, W'=W, \Xi =\{\xi\}, \Theta = \{\}, k_p = \infty, k_i = \infty, k_e = k \rangle.
\end{equation}

Solving \textsc{ose-dec} will return a network $f'(\cdot; W^*; \xi)$ for which $\p(f'(\cdot; W^*; \xi))$, $\i(f'(\cdot; W^*; \xi))$ and $\e(f'(\cdot; W^*; \xi), D')$ are $k$-minimal across all $\xi \in \Xi$, $W^* \in W'$, since $\Xi = \{\xi\}$ and $W = W'$. 

It follows that \textsc{nn-training-dec} $\leq_p$ \textsc{ose-dec}, and by \thmref{nnhard} and the above, we have then that \textsc{ose-dec} $\in$ \NP-hard. This concludes the proof of \lemref{npcose}.
\end{proof}

\end{document}